\newtheorem{theorem}{\bf \emph{Theorem}}
\newtheorem{lemma}{\bf \emph{Lemma}}
\newtheorem{definition}{\bf \emph{Definition}}
\newtheorem{example}{\bf \emph{Example}}
\newtheorem{remark}{\bf \emph{Remark}}
\begin{document}

\title{Feature Explosion: a generic optimization strategy for outlier detection algorithms}

\author{Qi~Li
\IEEEcompsocitemizethanks{\IEEEcompsocthanksitem Q. Li is with School of Information Science and Technology, Beijing Forestry University, Beijing, 100083, China.\protect\\
E-mail: liqi2024@bjfu.edu.cn}
}

\maketitle

\begin{abstract}
Outlier detection tasks aim at discovering potential issues or opportunities and are widely used in cybersecurity, financial security, industrial inspection, \emph{etc}. To date, thousands of outlier detection algorithms have been proposed. Clearly, in real-world scenarios, such a large number of algorithms is unnecessary. In other words, a large number of outlier detection algorithms are redundant. We believe the root cause of this redundancy lies in the current highly customized (\emph{i.e.}, non-generic) optimization strategies. Specifically, when researchers seek to improve the performance of existing outlier detection algorithms, they have to design separate optimized versions tailored to the principles of each algorithm, leading to an ever-growing number of outlier detection algorithms. To address this issue, in this paper, we introduce the ‘explosion’ from physics into the outlier detection task and propose a ‘generic’ optimization strategy based on ‘feature explosion’, called OSD (\underline{\textbf{O}}ptimization \underline{\textbf{S}}trategy for outlier \underline{\textbf{D}}etection algorithms). In the future, when improving the performance of existing outlier detection algorithms, it will be sufficient to invoke the OSD plugin without the need to design customized optimized versions for them. We compared the performances of 14 outlier detection algorithms on 24 datasets before and after invoking the OSD plugin. The experimental results show that the performances of all outlier detection algorithms are improved on almost all datasets. In terms of average accuracy, OSD make these outlier detection algorithms improve by 15\% (AUC), 63.7\% (AP).
\end{abstract}

\begin{IEEEkeywords}
Feature explosion, Optimization strategy, Outlier detection
\end{IEEEkeywords}

\section{Introduction}
\label{sec:introduction}
\textbf{Significance and Challenges.} Outlier detection is a key task in data analytics and machine learning, which aims to detect outliers in large amounts of data that do not conform to normal patterns. These outliers often represent potential issues or opportunities, such as hacking in computer networks, fraud in financial systems, and equipment failures in industrial production\cite{panjei2022survey}. However, the current outlier detection task faces two challenges:

\underline{$\bullet$ Challenge 1 (Non-generic Optimization Strategies):} To date, several thousand outlier detection algorithms have been proposed. However, in real-world scenarios, such a large number of algorithms is unnecessary. That is, most of the existing outlier detection algorithms are redundant. We believe that the root cause of redundancy is that too many optimized-version algorithms (see Definition \ref{def:optimized-version}) have been proposed but their optimization strategies are not generic. For example, KNNLOF\cite{xu2022outlier} is an optimized-version algorithm of LOF \cite{breunig2000lof} (a classical algorithm that detects outliers by exploiting local density differences among neighbors), and it proposes a neighbor querying method for different density distributions to improve the performance of LOF; DIF \cite{xu2023deep} is an optimized-version algorithm of IForest \cite{liu2012isolation} (anther classical outlier detection algorithm that partitions feature to generate a tree structure and then detects outliers based on the object's position in the tree structure), and it designs a non-linear partitioning based on the deep learning for better detection of hard outliers in complex datasets. However, since the principle of IForest only partitions features to generate a tree structure without querying neighbors, the optimization strategy of KNNLOF cannot be applied to IForest; Since the principle of LOF only calculates the density difference between neighbors without the need to partition features, the optimization strategy of DIF cannot be applied to LOF either. As a result, when seeking to improve the performances of LOF and IForest, researchers have to design different optimized-version algorithms for LOF and IForest, leading to an ever-growing number of outlier detection algorithms. Obviously, if a \textbf{‘generic’} optimization strategy \textbf{applicable to various outlier detection principles} is proposed, researchers will no longer need to design different optimized-version algorithms for existing outlier detection algorithms, but only need to call the generic optimization strategy plugin, thereby curbing the algorithm redundancy in outlier detection task.

\begin{definition}
\label{def:optimized-version}
\textbf{(The optimized-version algorithm)} For an outlier detection algorithm $\mathcal{A}$, by changing its principle, $\mathcal{A}$ becomes another outlier detection algorithm $\mathbb{A}$. If $\mathbb{A}$ outperforms $\mathcal{A}$, then $\mathbb{A}$ is an optimized-version algorithm of $\mathcal{A}$.
\end{definition}

\underline{$\bullet$ Challenge 2 (Loss of Original Advantages):} It is well-known that no outlier detection principle is flawless. Any optimized-version algorithm, which addresses original algorithm's some shortcomings, will inevitably introduce new limitations \cite{yang2024generalized}. For example, IForest is initially insensitive to data scale, but its optimized-version algorithm, DIF, becomes unsuitable for small-scale datasets due to the incorporation of deep network architectures. How to improve the performance of outlier detection algorithms while retaining their original advantages is another challenge in the current outlier detection task. Obviously, solving this challenge is of great practical significance.

\textbf{Ideas and Approaches.} In recent years, in the clustering task \cite{ran2023comprehensive} (another task as important as the outlier detection task in machine learning), some researchers have abandoned the algorithmic principle optimization strategy (\emph{i.e.}, optimizing the clustering performance by refining the principles of the existing clustering algorithms) \cite{li2023improve, pu2024adaptive, li2021hibog}. They have introduced gravity in physics to force similar objects within the dataset to move closer to each other. This movement renders the distribution of objects more friendly to the clustering task, thereby improving the accuracy of clustering algorithms. Numerous experimental results show that the accuracy of the optimized clustering algorithms often improves by more than 20\% \cite{li2023improve, li2021hibog}. More importantly, since this optimization process is independent of the clustering process, this physics-based optimization strategy is generic and applicable to various clustering principles. Inspired by this, we propose a physics-based \underline{\textbf{O}}ptimization \underline{\textbf{S}}trategy for outlier \underline{\textbf{D}}etection algorithms, called OSD, to address the challenges encountered in outlier detection tasks. Considering the characteristics of outlier detection tasks, OSD no longer introduces gravity from physics, but instead introduces \textbf{‘explosion’} from physics. Specifically, OSD first divides the dataset into several object-blocks (\emph{i.e.}, sets of adjacent objects) based on neighborhood relationships, and assigns a mass value to each object-block according to the number of objects. Although OSD cannot determine which object-blocks contain outliers and which contain normal objects, we have proven through a series of theorems that outliers and normal objects have a high probability of belong to different object-blocks, and that the object-blocks composed of outliers have less mass than the object-blocks composed of normal objects. Next, OSD inserts a virtual bomb in the feature space of the dataset. According to the principles of momentum and impulse in physics \cite{mansfield2020understanding}, after the virtual bomb explodes, object-blocks with small mass will acquire an initial velocity significantly greater than that of object-blocks with large mass, and therefore the outliers rocket away from the normal objects, as shown in Figure \ref{fig:Explosion}. Therefore, in the dataset after the explosion, the outlier detection algorithms can more easily distinguish between outliers and normal objects, leading to a higher accuracy. \emph{Since OSD is independent of the outlier detection process, it is applicable to various outlier detection algorithms with different principles. That is, the proposed optimization strategy, OSD, is generic, addressing \textbf{Challenge 1}. Furthermore, since OSD does not alter the principles of the optimized outlier detection algorithms, OSD can preserve the original advantages of the optimized outlier detection algorithms, addressing \textbf{Challenge 2}.}
\begin{figure}[h]
  \centering
  \includegraphics[width=3in]{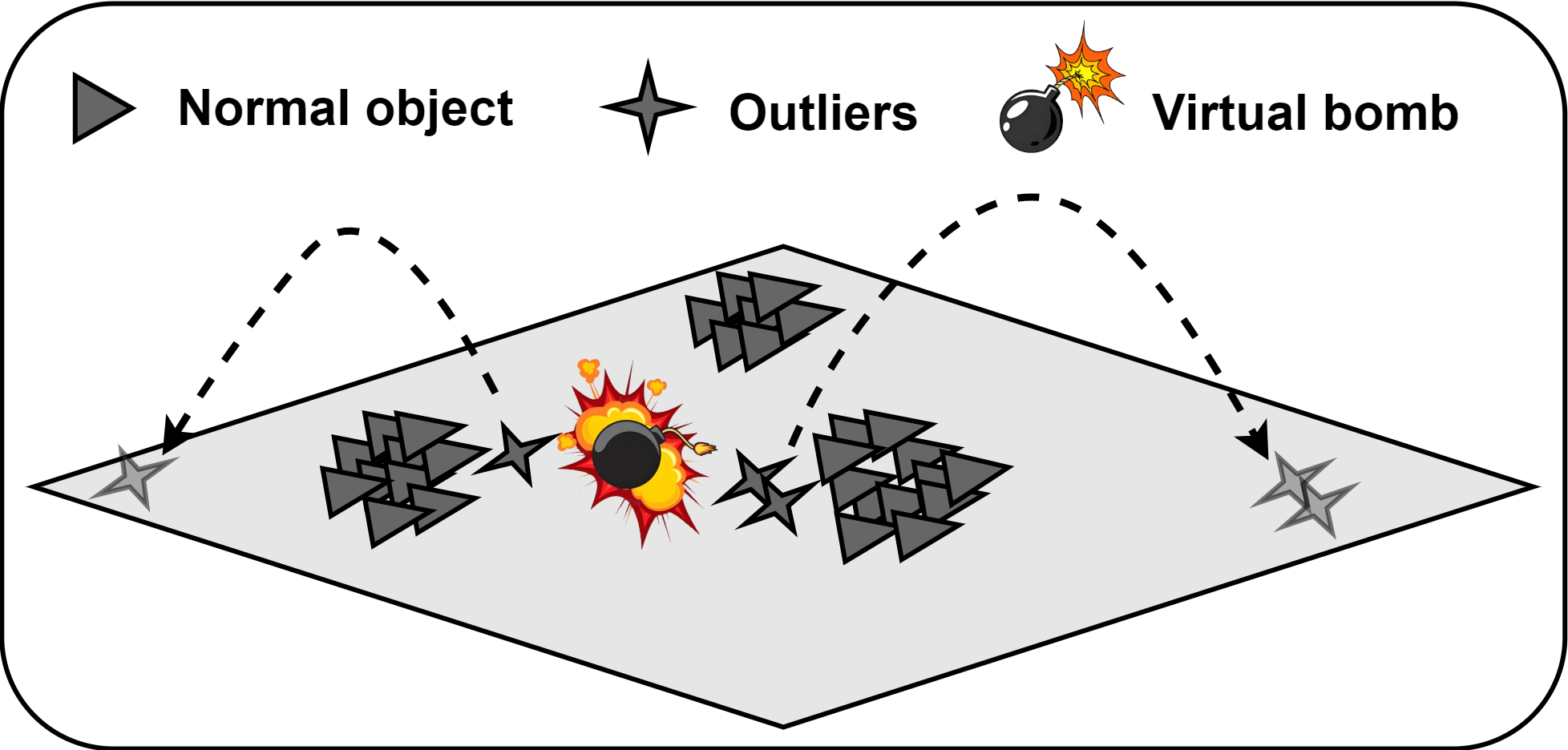}
  \caption{Feature explosion.}
  \label{fig:Explosion}
\end{figure}

\textbf{Main Contributions.} We summaries the main contributions of this paper:

1.) \textbf{We propose the first ‘generic’ optimization strategy for the outlier detection task}, called OSD, which can help different outlier detection algorithms achieve higher accuracy.

2.) \textbf{We are the first to apply physics to the outlier detection task}. By leveraging the principles of momentum and impulse in physics, OSD forces potential outliers to move rapidly away from potential normal objects, thereby reducing the difficulty of detecting outliers for outlier detection algorithms.

3.) \textbf{We are the first to separate the optimization process from the outlier detection process}, enabling the outlier detection algorithm optimized by OSD to preserve its original advantages.

4.) \textbf{Experimental results demonstrate that OSD enhances the accuracy of all optimized outlier detection algorithms}. In terms of average accuracy, these algorithms achieve an improvement of 15\% (AUC) and 63.7\% (AP).

\section{Related works}
\label{sec:Related works}
The proposed OSD aims to improve the accuracy of outlier detection algorithms by introducing physical principles into data analysis, therefore we discuss current related works on outlier detection algorithms and physics-based data analysis methods.

\textbf{Physics-based Data Analysis Methods.} Wright \cite{wright1977gravitational} first introduces gravity from physics into data analysis. He treats each object as a particle, with gravity existing between objects. Under the pull of gravity, all objects move in the feature space. Since then, many researchers begin to explore the application of gravity in the clustering task — a task in data analysis as critical as outlier detection. They usually adjust the distance between objects through gravity, so that objects within the same cluster become closer together, thereby reducing the difficulty of clustering algorithms in identifying clusters. Specifically, Newton \cite{blekas2007newtonian} believes that the dataset follows a Gaussian distribution, so it forces objects to move towards the cluster center in order to make the features of the Gaussian distribution more prominent. Herd \cite{wong2014herd} is similar to Newton, but it focuses more on the magnitude of force and sets a speed limit to avoid objects moving beyond the cluster center. In recent years, instead of forcing objects to be experienced by gravity in a fixed direction, many methods draw on the laws of celestial motion to stipulate that each object is experienced by gravities from multiple surrounding objects. HIBOG \cite{li2021hibog} is one of the most representative methods, and a large number of experiments have confirmed that HIBOG can improve the accuracy of tested clustering algorithms by more than twice. In order to avoid abnormal proximity of adjacent clusters, HIAC \cite{li2023improve} proposed a limited version of the gravity model, which stipulates that gravity only exists between valid neighbors. KDE-AHIAC \cite{pu2024adaptive} further improves HIAC by constructing a decision graph based on kernel density function and introducing an adaptive threshold selection method, making the selection of valid neighbors more convenient. DCLCMS \cite{zhang2022novel} identifies core objects based on the square ratio of gravity to mass, in order to improve the performance of clustering algorithms on datasets with large variations in density and manifold structure. PGCGP \cite{chen2023parallel} converts object movement into grid movement, significantly reducing the complexity of computing gravity on large-scale datasets. HCEG \cite{hao2024hceg} proposes a heterogeneous ensemble clustering method based on gravity to achieve intelligent data pricing. A small number of researchers also attempt to introduce gravity models into the outlier detection task and propose some gravity-based outlier detection algorithms \cite{zhu2022high, xie2020local}. These algorithms do not change the distance between objects just measure the similarity between objects based on gravity. However, they cannot reduce the difficulty of outlier detection algorithms in distinguishing between outliers and normal objects in the same way that the above-mentioned methods reduce the difficulty of clustering algorithms in detecting clusters. \emph{In this paper, for the outlier detection task, we propose an explosion shock force model that forces outliers and normal objects to move away from each other, radically reducing the difficulty of outlier detection algorithms in distinguishing between outliers and normal objects.}

\textbf{Outlier Detection Algorithms.} Outlier detection algorithms can be broadly classified as statistics-based algorithms, density-based algorithms, deep learning-based algorithms, and clustering-based algorithms. Specifically, statistics-based algorithms \cite{aydin2023boundary, li2023ecod, li2020copod, rousseeuw2011robust} usually assume that the dataset follows a certain distribution, and by analyzing the statistical properties of the objects, detect those objects that are significantly different from the overall distribution as outliers. Typically, the principles of statistical-based algorithms are easy to explain and perform well on small and low-dimensional datasets, but perform poorly on datasets that do not conform to known distributions. Density-based algorithms \cite{huang2023novel, zhou2024outlier, aydin2023boundary, breunig2000lof} detect outliers by comparing the local density of an object with its neighbors. Due to the focus on local information, density-based algorithms can identify local outliers, especially on datasets with uneven distribution. Clustering-based algorithms \cite{chen2021block, li2024detecting, rodriguez2014clustering} divide objects into different clusters and then detect those objects that do not belong to any cluster or are at the boundary of a cluster as outliers. Different cluster divisions may lead to vastly different outlier detection results. Deep learning-based algorithms \cite{hojjati2024dasvdd, goodge2022lunar, liu2021rca} have received a lot of attention in recent years, they embed the outlier detection task into neural networks. For example, by calculating the reconstruction error of objects in a self-encoder network, some deep learning-based algorithms detect an object with a large reconstruction error as an outlier. \emph{Obviously, there is an obvious principal barrier between different classes of outlier detection algorithms, which leads to the fact that existing optimization strategies cannot be suitable for different classes of outlier detection algorithms. In this paper, the optimization strategy we propose, OSD, is independent of the outlier detection principles, thus breaking down the barriers between different classes of outlier detection algorithms. As a result, OSD can optimize diverse outlier detection algorithms with vastly different principles.}

\section{The Proposed Method}
\label{sec:OSD}

\subsection{Problem Definition}
For a $d$-dimensional dataset $X$ containing $N$ objects, $X=\left\{x_1,x_2,\cdots,x_N\right\}\subset R^d$, OSD aims to change the position of objects in the feature space, transforming $X$ into $\widehat{X}$ ($\widehat{X}=\left\{\widehat{x_1},\widehat{x_2},\cdots,\widehat{x_N}\right\}\subset R^d$), such that the outliers (see Definition \ref{def:outlier} and Example \ref{exa:outliers and normal objects}) are further away from normal objects (see Definition \ref{def:Normal Object} and Example \ref{exa:outliers and normal objects}) and the distribution of outliers is sparser in $\widehat{X}$ than in $X$. Ultimately, by identify outliers from $\widehat{X}$ instead of $X$, outlier detection algorithms can achieve higher accuracy.

\begin{definition}
\label{def:Normal Object}
\textbf{(Cluster and Normal Object)} For $\mathfrak{X}\subseteq X$, if the objects within $\mathfrak{X}$ are mutual neighbors and the number of objects within $\mathfrak{X}$ is not significantly fewer than the total number of objects in $X$, then $\mathfrak{X}$ is a cluster in $X$. The objects in $\mathfrak{X}$ are normal objects. The $i$-th cluster in $X$ is denoted as ${\mathcal{CLU}}_i$.
\end{definition}

\begin{definition}
\label{def:outlier}
\textbf{(Outlier)} Suppose $X$ contains $f$ clusters, namely ${\mathcal{CLU}}_1,{\mathcal{CLU}}_2,\cdots,{\mathcal{CLU}}_f$. For $\forall x_j\in X$ (i.e., the $j$-th object in $X$), if $x_j\notin\forall{\mathcal{CLU}}_i$, then $x_j$ is an outlier.
\end{definition}

\begin{figure}[h]
  \centering
  \includegraphics[width=3in]{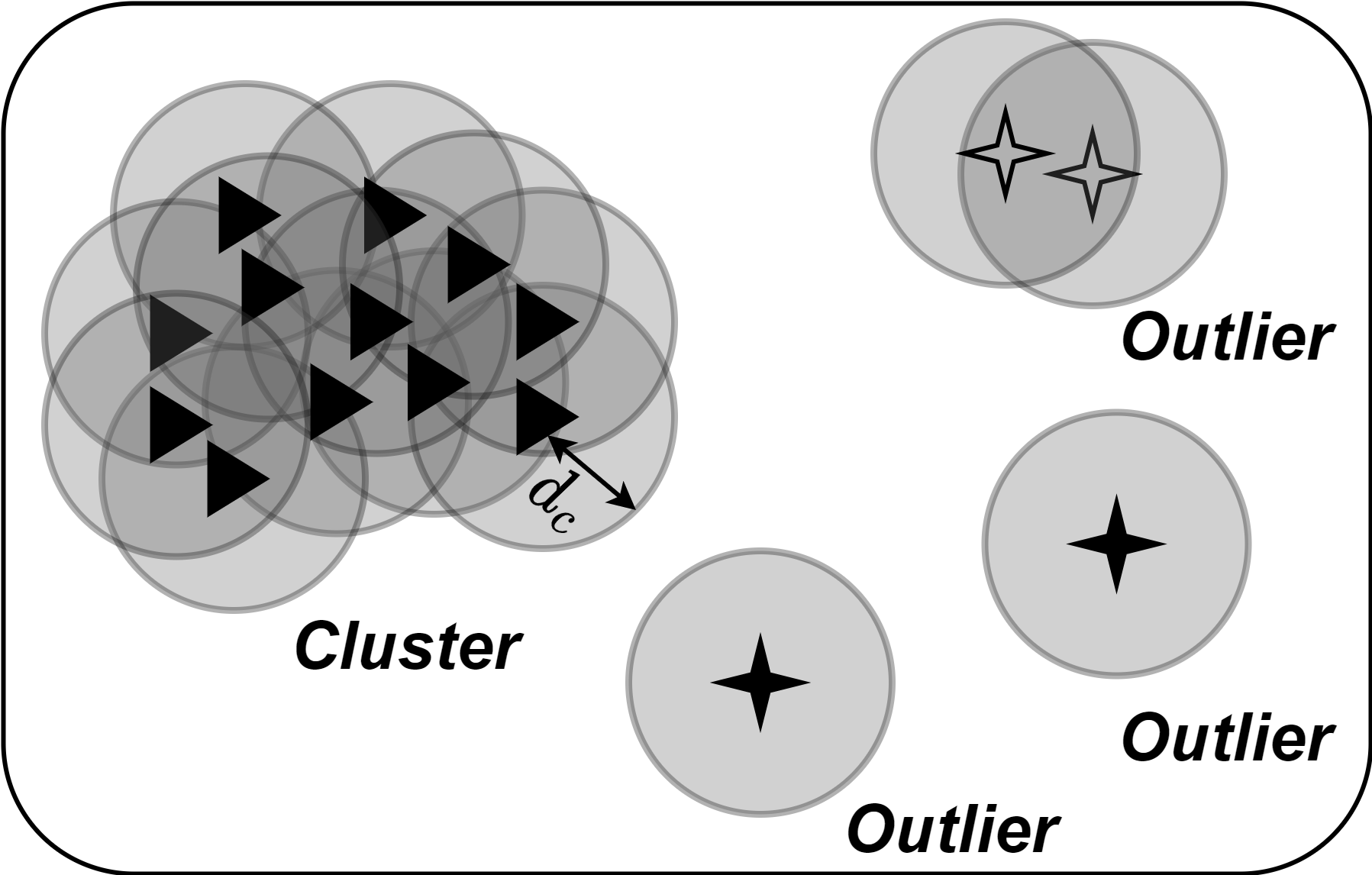}
  \caption{An example about outliers and normal objects.}
  \label{fig:outliers and normal objects}
\end{figure}

\begin{example}
\label{exa:outliers and normal objects}
\textbf{(Clusters, Normal Objects, and Outliers)} Let objects with distance less than $d_c$ be neighbors, where $d_c$ is a small value. In Figure \ref{fig:outliers and normal objects}, the radius of each circular area is $d_c$. Obviously, the objects within the circular area of each object are all its neighbors. By observation, the circular areas of triangular objects overlap with each other, so triangular objects are mutual neighbors. Due to the large number of triangular objects, according to Definition \ref{def:Normal Object}, the set they form is a cluster, and they are normal objects. Although the two hollow star-shaped objects are neighbors to each other, their number is far less than the total number of objects, so the set they form is not a cluster. According to Definition \ref{def:Normal Object} and Definition \ref{def:outlier}, all star-shaped objects are outliers.
\end{example}

\subsection{Overview}
OSD consists of two steps:
\begin{itemize}
\item \textbf{Step 1 (Explosion Process):} OSD inserts a virtual bomb in the feature space of $X$, and then force outliers to rocket away from normal objects through explosion, resulting in transforming $X$ into $\overline{\overline{X}}$, as detailed in Section \ref{sec:Explosion process}.

\item \textbf{Step 2 (Repulsion Process):} After the explosion, to prevent certain outliers from mixing into normal objects, OSD introduces repulsive forces to force non-original neighbors to move away from each other, resulting in transforming $\overline{\overline{X}}$ into $\widehat{X}$, as detailed in Section \ref{sec:Repulsion Process}.
\end{itemize}

\subsection{Explosion Process}
\label{sec:Explosion process}
\textbf{Motivation.} According to Definitions \ref{def:Normal Object} and \ref{def:outlier}, normal objects are those dense objects that are clustered with each other in the feature space, while outliers are those sparse objects scattered in the feature space. All existing outlier detection algorithms essentially identify outliers by distinguishing the differences between outliers and normal objects. Clearly, the greater the difference between outliers and normal objects (\emph{i.e.}, \textbf{the sparser the outliers and the further the outliers are from the normal objects}), the easier it is for outlier detection algorithms to distinguish differences (\emph{i.e.}, \textbf{the greater the probability that the outlier detection algorithms obtain highly accurate results}). Therefore, in this paper, we plan to design a ‘\underline{Feature Explosion}’ mechanism (see Definition \ref{def:Feature Explosion}) to force the outliers to move away from the normal objects and to disperse the outliers.

\begin{definition}
\label{def:Feature Explosion}
\textbf{(Feature Explosion)} The dramatic change in the position of objects in the feature space is called the feature explosion.
\end{definition}

\textbf{Main Idea (Feature Explosion Mechanism).} Inspired by the explosion phenomenon in the real world, OSD inserts a virtual bomb into the feature space of the dataset, and then simulates the explosion to drive outliers away from normal objects, as shown in Figure \ref{fig:Explosion}. Specifically, \textbf{$\bullet$ Step 1 (Section \ref{sec:The object-block division process}):} OSD first divides the dataset into several object-blocks and assigns them different masses. Although OSD cannot determine which object-blocks contain outliers and which contain normal objects, we have proven through a series of theorems that outliers and normal objects have a high probability of belong to different object-blocks (see Remark \ref{rem:First Characteristic} for details), and that the object-blocks composed of outliers have less mass than the object-blocks composed of normal objects (see Remark \ref{rem:Second Characteristic} for details). \textbf{$\bullet$ Step 2 (Section \ref{sec:The explosion process}):} OSD detonates the virtual bomb. According to the principles of momentum and impulse in physics \cite{mansfield2020understanding}, object-blocks with different masses will acquire different initial velocities during the explosion. Therefore, OSD can control the movement of object-blocks based on different initial velocities, such that object-blocks with small masses are rocket away from those with large masses. As a result, the outliers are rocket away from the normal objects. Below, we will describe the object-block division process (Section \ref{sec:The object-block division process}) and the explosion process (Section \ref{sec:The explosion process}) in detail.

\subsubsection{The Object-block Division Process}
\label{sec:The object-block division process}

\begin{definition}
\label{def:kNN neighbors}
\textbf{($k$NN neighbors)} For $\forall x_i\in X$ and a positive integer $k$, the $k$NN neighbors of $x_i$, denoted as $ N_k(x_i)$, is a subset of $X$, satisfying the following conditions: 1.) $N_k(x_i)$ contains $k$ objects $x_{i_1},x_{i_2},\cdots,x_{i_k}$, in which $i_1,i_2,\cdots i_k\in\{1,2,\cdots N\}$; 2.) For $\forall x_j\in X-N_k(x_i)$ and $\forall x_g\in N_k(x_i)$, $\|x_g-x_i\|_2\le \|x_j-x_i\|_2$, in which $\|x_g-x_i\|_2$ is the Euclidean Distance between $x_g$ and $x_i$.
\end{definition}

\begin{example}
\label{exa:kNN neighbors}
\textbf{($k$NN neighbors)} For $X\subset R^3$, $X=\left\{x_1,x_2,x_3,x_4\right\}$, where $x_1=\langle 1,0,0\rangle$, $x_2=\langle 2,0,0\rangle$, $x_3=\langle 3,0,0\rangle$, and $x_4=\langle 4,0,0\rangle$. $\|x_1-x_2\|_2=\sqrt{(1-2)^2+(0-0)^2+(0-0)^2}=1$. Similarly, $\|x_1-x_3\|_2=2$, $\|x_1-x_4\|_2=3$. If $k=2$, then the $k$NN neighbors of $x_1$ are $x_2$ and $x_3$.
\end{example}

\begin{figure}[h]
  \centering
  \includegraphics[width=3in]{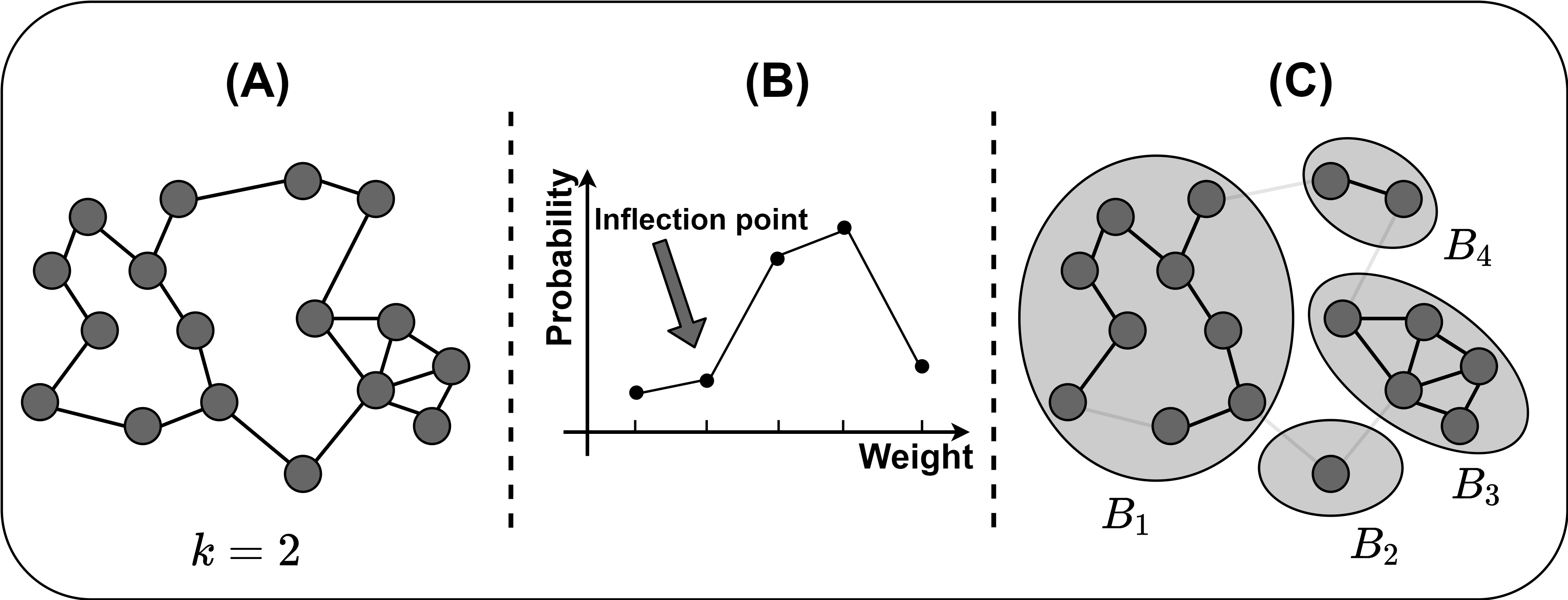}
  \caption{The object-block division.}
  \label{fig:object-block division}
\end{figure}

OSD generates a $k$-nearest neighbor graph for dataset $X$, where each object is connected to its $k$NN neighbors by edges, as shown in Figure \ref{fig:object-block division}(A). Specifically, if there is an edge between object $x_i$ (\emph{i.e.}, the $i$-th object in $X$) and object $x_j$, and then the edge is denoted as $e_{ij}$, and its weight is defined as
\begin{equation}
\omega\left(e_{ij}\right)=-\|x_i-x_j\|_2.
\label{eq:weight}
\end{equation}
The farther the distance between object $x_i$ and object $x_j$, the smaller the weight of $e_{ij}$. After counting the weight values of all edges, OSD computes the probability distribution of these weight values. Specifically, OSD first divides the range of these weight values, $\left[\mathop{\min}\limits_{i,j\le N}{\left(\omega\left(e_{ij}\right)\right)},\mathop{\max}\limits_{i,j\le N}{\left(\omega\left(e_{ij}\right)\right)}\right]$, into several equidistant intervals, each with a length of $\frac{\left(\mathop{\max}\limits_{i,j\le N}{\left(\omega\left(e_{ij}\right)\right)}-\mathop{\min}\limits_{i,j\le N}{\left(\omega\left(e_{ij}\right)\right)}\right)\cdot 10}{N}$. For the $g$-th interval $\Delta_g$, its probability value is
\begin{equation}
\mathcal{P}(\Delta_g)=\frac{\sum_{i,j\le N}\varphi(\omega\left(e_{ij}\right)|\Delta_g)}{N},
\label{eq:probability}
\end{equation}
in which, if $\omega\left(e_{ij}\right)\in\Delta_g$, then $\varphi\left(\omega\left(e_{ij}\right)\middle|\Delta_g\right)=1$; Otherwise, $\varphi\left(\omega\left(e_{ij}\right)\middle|\Delta_g\right)=0$. For the $k$-nearest neighbor graph in Figure \ref{fig:object-block division}(A), the probability distribution curve of weight values is shown in Figure \ref{fig:object-block division}(B).

Due to the fact that each object in the $k$-nearest neighbor graph is only connected to its $k$NN neighbors, the number of the edges with large weight values is significantly higher than that of the edges with small weight values. Therefore, the probability distribution curve inevitably has a clear inflection point, as indicated by the arrow in Figure \ref{fig:object-block division}(B). OSD treats the inflection point as a threshold and clips edges with weight values less than the threshold  (we will discuss the impact of this threshold on the results in the Section \ref{sec:Robustness experiments}). Finally, in the pruned $k$-nearest neighbor graph, the set of objects within each connected subgraph is an object-block, as defined in Definition \ref{def:Object-block}. The number of objects within an object-block is referred to as the mass of the object-block, as defined in Definition \ref{def:mass}.

\begin{definition}
\label{def:Object-block}
\textbf{(Object-block)} In the pruned $k$-nearest neighbor graph, for $\forall x_i,x_j\in\mathcal{A}\subseteq X$, if $\exists\left\{a_1,a_2,\cdots,a_Z\right\}\subseteq\mathcal{A}$ such that $x_i$ is connected to $a_1$, $x_j$ is connected to $a_Z$, and $a_l$ is connected to $a_{l+1}$ (for $\forall l<Z$); in addition, for $\forall x_g\in X-\mathcal{A}$, if $\nexists x_t\in \mathcal{A}$ is connected to $x_g$, then $\mathcal{A}$ is an object-block. The $i$-th object-block in $X$ is denoted as $\mathcal{B}_i$.
\end{definition}

\begin{definition}
\label{def:mass}
\textbf{(Mass)} For $\forall\mathcal{B}_i\subset X$, if $\mathcal{B}_i$ contains $m$ objects, then the mass of $\mathcal{B}_i$ is $m$, denoted as $\mathcal{M}_i=m$.
\end{definition}

\begin{example}
\label{exa:Object-block and mass}
\textbf{(Object-block and Mass)} For the $k$-nearest neighbor graph in Figure \ref{fig:object-block division}(A), when the value indicated by the arrow in Figure \ref{fig:object-block division}(B) is set as the threshold and edges with weights less than this threshold are clipped, $X$ is divided into 4 object-blocks (see Figure \ref{fig:object-block division}(C)). The first object-block $\mathcal{B}_1$ contains 9 objects, so $\mathcal{M}_1=9$. Similarly, $\mathcal{M}_2=1$, $\mathcal{M}_3=5$, and $\mathcal{M}_4=2$.
\end{example}

We prove through a series of theorems that the object-blocks have two characteristics (see Remark \ref{rem:First Characteristic} and Remark \ref{rem:Second Characteristic} for details), which will be crucial for controlling the explosion process in Section \ref{sec:The explosion process}.

\begin{remark}
\label{rem:First Characteristic}
\textbf{(First Characteristic)} Outliers and normal objects are highly likely to belong to different object-blocks, as proven in Theorem \ref{the:diffrent block}.
\end{remark}

\begin{remark}
\label{rem:Second Characteristic}
\textbf{(Second Characteristic)} The mass of the object-block composed of outliers is always smaller than the mass of the object-block composed of normal objects, as proven in Theorem \ref{the:mass}.
\end{remark}

\begin{lemma}
\label{lemma}
Let $\mathcal{E}$ be the set of remaining edges in the pruned $k$-nearest neighbor graph. For $\forall e\in\mathcal{E}$, $\mathcal{P}_{normal}(e)\gg\mathcal{P}_{outlier}(e)$, in which $\mathcal{P}_{outlier}(e)$ is the probability that $e$ is an outlier-edge (i.e., an edge connecting at least one outlier), and $\mathcal{P}_{normal}(e)$ is the probability that $e$ is a normal-edge (i.e., an edge connecting only normal objects). In other words, the outlier-edge is a very low probability event in $\mathcal{E}$.
\end{lemma}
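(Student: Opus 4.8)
The plan is to read the two quantities as the fractions of the two edge types inside $\mathcal{E}$. Writing $\mathcal{E}_{outlier}$ for the set of outlier-edges and $\mathcal{E}_{normal}$ for the set of normal-edges of $\mathcal{E}$, every surviving edge is of exactly one of these two types, so that $\mathcal{P}_{outlier}(e)=|\mathcal{E}_{outlier}|/|\mathcal{E}|$ and $\mathcal{P}_{normal}(e)=|\mathcal{E}_{normal}|/|\mathcal{E}|=1-\mathcal{P}_{outlier}(e)$. Hence the assertion $\mathcal{P}_{normal}(e)\gg\mathcal{P}_{outlier}(e)$ is equivalent to $|\mathcal{E}_{outlier}|\ll|\mathcal{E}_{normal}|$, and the whole proof reduces to a counting estimate: a cheap upper bound on the number of surviving outlier-edges against a matching lower bound on the number of surviving normal-edges. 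Throughout, write $N_{normal}$ and $N_{outlier}$ for the numbers of normal objects and of outliers in $X$.

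For the lower bound on $|\mathcal{E}_{normal}|$, I would use Definition \ref{def:Normal Object}: every cluster ${\mathcal{CLU}}_l$ is made of mutual neighbors and has a size comparable to $N$, in particular far larger than $k$. Then, for all but a negligible set of boundary objects, a normal object $x_i\in{\mathcal{CLU}}_l$ has its $k$ nearest objects entirely inside ${\mathcal{CLU}}_l$ and at distance at most the neighbor radius $d_c$ of Example \ref{exa:outliers and normal objects}, so all $k$ edges leaving $x_i$ are short intra-cluster normal-edges of weight at least $-d_c$. Summing over normal objects yields $\Theta(N_{normal}\cdot k)$ such intra-cluster edges, all of whose weights lie near the top of the weight range, i.e.\ they form the tall part of the histogram \eqref{eq:probability}. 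Since the clipping threshold (the inflection point) sits at the foot of this spike --- argued in the last paragraph --- essentially all of them survive, so $|\mathcal{E}_{normal}|=\Omega(N_{normal}\cdot k)$.

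For the upper bound on $|\mathcal{E}_{outlier}|$, the key point is that outliers are scarce: by Definitions \ref{def:Normal Object} and \ref{def:outlier} the clusters already absorb almost all of $X$, hence $N_{outlier}\ll N_{normal}$. Each outlier has at most $k$ out-edges in the $k$NN graph, and the number of edges pointing \emph{to} a fixed object is at most a constant multiple of $k$ depending only on the dimension $d$ (a standard reverse-$k$NN packing bound; alternatively, a normal object inside a large cluster essentially never carries an outlier among its $k$ nearest neighbors, so the in-edges of an outlier come almost exclusively from other outliers). Thus $|\mathcal{E}_{outlier}|$ is at most the number of outlier-edges before pruning, which is $O(N_{outlier}\cdot k)$. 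Combining,
\[
\mathcal{P}_{outlier}(e)=\frac{|\mathcal{E}_{outlier}|}{|\mathcal{E}|}\le\frac{|\mathcal{E}_{outlier}|}{|\mathcal{E}_{normal}|}=O\!\left(\frac{N_{outlier}}{N_{normal}}\right)\ll 1,
\]
whence $\mathcal{P}_{normal}(e)=1-\mathcal{P}_{outlier}(e)\gg\mathcal{P}_{outlier}(e)$, as claimed.

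The step I expect to be the genuine obstacle is the fact, used twice above, that the inflection point of the weight histogram lies \emph{below} the bulk of the intra-cluster weights (so the numerous short normal-edges are retained) while still lying \emph{above} the bulk of the outlier-incident weights. I would make this precise by writing the histogram as a superposition of a tall, narrow peak of $\Theta(N_{normal}k)$ intra-cluster weights inside $[-d_c,0]$ and a low, broad tail of $O(N_{outlier}k)=o(N_{normal}k)$ outlier-incident weights spread over $[\min_{i,j}\omega(e_{ij}),-d_c)$; the curvature of the curve changes sign where the thin tail meets the foot of the peak, i.e.\ near $-d_c$, so clipping there discards the tail and keeps the peak. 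Converting the informal phrases ``not significantly fewer'' of Definition \ref{def:Normal Object} and the bin width of \eqref{eq:probability} into an explicit inequality $N_{outlier}\le\varepsilon N_{normal}$ is the place where the argument needs the most care; the rest is bookkeeping.
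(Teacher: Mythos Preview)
Your proposal is correct and follows essentially the same line as the paper's own proof: both arguments rest on (i) the standing assumption $N_{outlier}\ll N_{normal}$, (ii) the observation that a normal object's $k$ nearest neighbors lie inside its own cluster so that normal-to-normal edges dominate, and (iii) the fact that outlier-incident edges are long and hence preferentially clipped. Your version is considerably more quantitative than the paper's (which is a short heuristic paragraph), and your careful discussion of where the inflection point sits relative to the intra-cluster weights addresses an issue the paper simply takes for granted.
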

\begin{proof}
OSD is based on a fundamental assumption that the number of outliers in the dataset is much smaller than the number of normal objects. This assumption aligns with objective laws in the real world, and nearly all outlier detection algorithms are based on this assumption \cite{panjei2022survey}. Therefore, the edges connecting outliers are few. In addition, according to Definition \ref{def:Normal Object}, normal objects are close to each other, so the $k$NN neighbors of normal objects are almost also normal objects. That is, for an edge whose one endpoint is a normal object, its another endpoint is always a normal object. And according to Definition \ref{def:outlier}, outliers are sparsely distributed, so the edges connecting outliers are necessarily long. Since OSD prunes the $k$-nearest neighbor graph by clipping long edges (\emph{i.e.}, the edges with small weights), the number of edges connecting outliers is further reduced. In conclusion, the outlier-edge is a very low probability event in $\mathcal{E}$.
\end{proof}

\begin{theorem}
\label{the:diffrent block}
Let $X$ be divided into $C$ object-blocks, $\mathcal{B}_1,\mathcal{B}_2,\cdots,\mathcal{B}_C$. For $\forall x_i,x_j\in X$, if $x_i$ is a normal object and $x_j$ is an outlier, then with high probability, $\nexists\mathcal{B}_y\in\left\{\mathcal{B}_1,\mathcal{B}_2,\cdots,\mathcal{B}_C\right\}$ such that $x_i,x_j\in\mathcal{B}_y$.
\end{theorem}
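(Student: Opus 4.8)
The plan is to reduce the bad event ``some object-block contains both a normal object and an outlier'' to the event ``the pruned $k$-nearest neighbor graph contains an outlier-edge,'' and then invoke Lemma \ref{lemma} to argue the latter is rare. So first I would fix an arbitrary normal object $x_i$ and an arbitrary outlier $x_j$, and suppose $x_i$ and $x_j$ happen to lie in a common object-block $\mathcal{B}_y$. Since $x_i$ is normal and $x_j$ is an outlier they are distinct, so $\mathcal{B}_y$ contains at least two objects, and by the connectivity clause of Definition \ref{def:Object-block} there is a path $x_i=v_0,v_1,\dots,v_{Z+1}=x_j$ in the pruned graph all of whose vertices lie in $\mathcal{B}_y$ and all of whose edges lie in $\mathcal{E}$.

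Next I would walk along this path. Because $v_0$ is normal (Definition \ref{def:Normal Object}) while $v_{Z+1}$ is an outlier (Definition \ref{def:outlier}), there must be some index $t$ with $v_t$ normal and $v_{t+1}$ an outlier --- take, for instance, the largest $t$ for which $v_t$ is a normal object. The edge $e_{v_tv_{t+1}}\in\mathcal{E}$ then joins a normal object to an outlier and is therefore an outlier-edge in the sense of Lemma \ref{lemma}. The key point is that exactly one such edge suffices no matter how long the path is, so there is no length-dependent blow-up. Consequently the event $\{\exists y:\ x_i,x_j\in\mathcal{B}_y\}$ is contained in the event $\mathcal{O}:=\{\mathcal{E}\text{ contains at least one outlier-edge}\}$, and it is enough to show $\mathcal{P}(\mathcal{O})$ is small.

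Finally I would bound $\mathcal{P}(\mathcal{O})$. Writing $\mathcal{O}=\bigcup_{e}\{e\in\mathcal{E}\text{ and }e\text{ is an outlier-edge}\}$ and applying a union bound gives $\mathcal{P}(\mathcal{O})\le\sum_{e}\mathcal{P}_{outlier}(e)$. By the fundamental assumption underlying Lemma \ref{lemma} the outliers are few; and because outliers are sparsely distributed (Definition \ref{def:outlier}) every edge incident to an outlier is long, hence is removed when OSD clips the low-weight (i.e.\ long) edges during pruning. Thus only very few candidate outlier-edges survive in $\mathcal{E}$, and for each surviving one Lemma \ref{lemma} already tells us $\mathcal{P}_{outlier}(e)\ll\mathcal{P}_{normal}(e)$ is tiny. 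Summing a small number of tiny terms keeps $\mathcal{P}(\mathcal{O})$ small, so $\mathcal{P}(\exists y:\ x_i,x_j\in\mathcal{B}_y)$ is small, i.e.\ with high probability no object-block contains both $x_i$ and $x_j$, which is exactly the claim.

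The step I expect to be the main obstacle is the last one. Lemma \ref{lemma} is stated with the informal relation ``$\gg$,'' so turning it into a genuine quantitative bound on $\mathcal{P}(\mathcal{O})$ forces a choice: either (i) make the sparsity-plus-pruning argument precise enough to bound the number of outlier-incident edges that can remain in $\mathcal{E}$, and then use the union bound above, or (ii) impose an approximate-independence assumption so the per-edge probabilities can be multiplied rather than summed. The subtlety is that the events $\{e\in\mathcal{E},\ e\text{ outlier-edge}\}$ are positively correlated (they can share outlier endpoints); the union bound sidesteps this correlation but then rests entirely on the ``few surviving outlier-edges'' count, which is the part that needs the most care.
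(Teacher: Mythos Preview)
Your reduction is the same as the paper's: assuming $x_i,x_j$ share a block, both of you extract a surviving outlier-edge in $\mathcal{E}$ and then appeal to Lemma~\ref{lemma}. Your explicit path-walk to locate the normal-to-outlier transition edge is in fact more careful than the paper, which simply asserts that ``there exists an edge between an outlier and a normal object in $\mathcal{B}_y$'' without spelling out why connectivity forces one. Where you diverge is in how Lemma~\ref{lemma} is cashed out. You run a union bound over candidate outlier-edges and then argue, via sparsity plus pruning, that only a few terms survive. The paper instead reasons through the weight-interval structure of the pruning rule: it places the surviving outlier-edge's weight in some interval $p$ strictly to the right of the inflection-point interval $q$, observes that outlier-edges carry the smallest weights so that (with high probability) every edge whose weight lies in intervals $q{+}1,\dots,p$ is also an outlier-edge, and notes that those intervals sit in the high-probability region of the weight distribution---forcing outlier-edges to be a high-probability event in $\mathcal{E}$ and contradicting Lemma~\ref{lemma} directly. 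The paper's route thus exploits the specific inflection-point mechanism and sidesteps your union-bound count; your route is more portable but, exactly as you flag, rests on quantifying the number of surviving outlier-incident edges. At the level of rigor the paper adopts, both closures are acceptable.
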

\begin{proof}
Assume that $\exists\mathcal{B}_y\in\left\{\mathcal{B}_1,\mathcal{B}_2,\cdots,\mathcal{B}_C\right\}$ such that $ x_i,x_j\in\mathcal{B}_y$. That is, after clipping the edges, there exists an edge between an outlier and a normal object in $\mathcal{B}_y$. Let the weight of this edge fall within the $p$-th interval of weight values, and let the inflection point of the probability distribution curve lie within the $q$-th interval.\\
$\because$ The edge between an outlier and a normal object in $\mathcal{B}_y$ is not clipped.\\
$\therefore$ $p>q$.\\
$\because$ According to Definition \ref{def:Normal Object} and Definition \ref{def:outlier}, outliers are sparsely distributed, while normal objects are close to each other.\\
$\therefore$ The weight values of the edges between outliers and normal objects are always smaller than the weight values of the edges between normal objects.\\
$\therefore$ With high probability, the edges with weight values in between the $q$-th interval and the $p$-th interval are the edges connecting to outliers, \emph{i.e.}, outlier-edges.\\
$\because$ In the probability distribution curve, the probability on the left side of the inflection point (\emph{i.e.}, the $q$-th interval) is extremely small, while the probability increases sharply on the right side of the inflection point. Therefore, the inflection point is the boundary between the high probability event and the low probability event.\\
$\therefore$ The edge with weight value in between the $q$-th interval and the $p$-th interval belongs to a high probability event.\\
$\therefore$ With high probability, the outlier-edge is a high probability event.\\
$\therefore$ With high probability, the assumption contradicts Lemma \ref{lemma}, so Theorem \ref{the:diffrent block} is proved.
\end{proof}

\begin{theorem}
\label{the:mass}
For $\forall\mathcal{B}_i,\mathcal{B}_j\subset X$, if $\mathcal{B}_i$ is an object-block composed of outliers and $\mathcal{B}_j$ is an object-block composed of normal objects, then $\mathcal{M}_i<\mathcal{M}_j$.
\end{theorem}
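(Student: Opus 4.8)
The plan is to bound the mass $\mathcal{M}_j$ of the normal-object block from below by the size of a whole cluster, to bound the mass $\mathcal{M}_i$ of the outlier block from above by the total number of outliers in $X$, and then to invoke the fundamental assumption underlying OSD — that the number of outliers is far smaller than the number of normal objects — to conclude $\mathcal{M}_i<\mathcal{M}_j$.

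First I would establish the lower bound on $\mathcal{M}_j$. Take any $x\in\mathcal{B}_j$; since $\mathcal{B}_j$ is composed of normal objects, $x$ belongs to some cluster $\mathcal{CLU}_t$. By Definition \ref{def:Normal Object} the members of $\mathcal{CLU}_t$ are mutual neighbors, hence pairwise close, hence by \eqref{eq:weight} joined by edges of large (near-zero) weight in the $k$-nearest neighbor graph. Such intra-cluster edges lie to the right of the inflection point of the weight distribution — precisely the high-probability bulk that the argument of Lemma \ref{lemma} and Theorem \ref{the:diffrent block} shows is retained by the pruning, while the clipped edges are the rare long outlier-edges. Therefore $\mathcal{CLU}_t$ remains connected after pruning and lies inside a single connected component, namely $\mathcal{B}_j$, so $\mathcal{CLU}_t\subseteq\mathcal{B}_j$ and $\mathcal{M}_j=|\mathcal{B}_j|\ge|\mathcal{CLU}_t|$; by Definition \ref{def:Normal Object} the latter is not significantly fewer than $N$.

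Next I would establish the upper bound on $\mathcal{M}_i$. By hypothesis every object in $\mathcal{B}_i$ is an outlier (Theorem \ref{the:diffrent block} guarantees that such pure outlier-blocks do arise with high probability, so the hypothesis is not vacuous), hence $\mathcal{M}_i=|\mathcal{B}_i|$ is at most the total number of outliers in $X$. By the assumption on which OSD — and essentially every outlier detection algorithm — rests, this number is much smaller than the number of normal objects, hence much smaller than $N$, hence strictly below $|\mathcal{CLU}_t|$. Combining the two bounds gives $\mathcal{M}_i\le(\text{number of outliers})<|\mathcal{CLU}_t|\le\mathcal{M}_j$, which is the claim.

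The delicate point is the assertion in the second step that pruning never splits a cluster. This rests on two sub-facts: that a cluster induces a connected subgraph of the $k$-nearest neighbor graph (implicit once $k$ is not too small, since a normal object's nearest neighbors are themselves normal objects of the same cluster), and that every intra-cluster edge sits above the pruning threshold (which follows from the same short-edge-versus-long-edge dichotomy that drives Lemma \ref{lemma}). I expect this to be the main obstacle; once it is granted, the conclusion is just a chain of inequalities together with the outlier-scarcity assumption.
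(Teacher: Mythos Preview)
Your proposal is correct, and it is considerably more explicit than the paper's own argument. The paper proves Theorem~\ref{the:mass} in three informal sentences: since object-blocks arise by clipping long edges, far-apart objects land in different blocks; by Definitions~\ref{def:Normal Object} and~\ref{def:outlier} normal objects exhibit ``large-scale aggregation'' whereas outliers do not; hence outliers are ``split more severely'' and their blocks are smaller. No explicit bounds, no chain of inequalities, no appeal to the outlier-scarcity assumption --- just the qualitative contrast between dense aggregation and sparse scatter under edge-clipping.

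Your route instead sandwiches the two masses: $\mathcal{M}_j\ge|\mathcal{CLU}_t|$ via a cluster-stays-connected argument, and $\mathcal{M}_i\le\#(\text{outliers})$ trivially, then closes the gap with the scarcity assumption from Lemma~\ref{lemma}. This buys a sharper, inspectable inequality, but at the price of the sub-claim you correctly flag as delicate --- that pruning never disconnects a cluster. The paper's version sidesteps that sub-claim by never asserting a whole cluster sits in one block; it only needs that whatever pieces normal objects fall into are still larger than the outlier pieces, which follows directly from the aggregation-versus-sparsity contrast. So the paper's argument is weaker in precision but also in hypotheses; yours is tighter but leans on an extra structural fact.
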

\begin{proof}
Object-blocks are divided by clipping long edges, so objects which are far apart are split into different object-blocks. According to Definition \ref{def:Normal Object} and \ref{def:outlier}, normal objects have large-scale aggregation (\emph{i.e.}, a large number of normal objects are close to each other), while outliers do not have such aggregation. Therefore, outliers are split more severely than normal objects.
\end{proof}

We describe the detailed implementation of object-block division in Algorithm \ref{alg:object-block division}.

\begin{algorithm}
\caption{The Object-block Division}
\label{alg:object-block division}
\KwIn{$X$, $k$}
\KwOut{$\{\mathcal{B}_1,\mathcal{B}_2,\cdots,\mathcal{B}_C\}$, $\{\mathcal{M}_1,\mathcal{M}_2,\cdots,\mathcal{M}_C\}$}
\ Calculating the weight for each edge according to the formula (\ref{eq:weight}).\\
\ Calculating the probability for weight values according to the formula (\ref{eq:probability}).\\
\ Generating a probability distribution curve and treating the inflection point as the threshold.\\
\ Clipping edges with weight values less than the threshold.\\
\ According to Definition \ref{def:Object-block}, searching for all connected subgraphs in the pruned $k$-nearest neighbor graph to obtain $\mathcal{B}_1,\mathcal{B}_2,\cdots,\mathcal{B}_C$ and $\mathcal{M}_1,\mathcal{M}_2,\cdots,\mathcal{M}_C$.\\
\Return{$\{\mathcal{B}_1,\mathcal{B}_2,\cdots,\mathcal{B}_C\}$, $\{\mathcal{M}_1,\mathcal{M}_2,\cdots,\mathcal{M}_C\}$}
\end{algorithm}

\subsubsection{The Explosion Process}
\label{sec:The explosion process}
We insert a virtual bomb into the feature space of dataset $X$, and construct a physical motion model for object-blocks under the explosion. This model is subdivided into two parts: \textbf{Virtual Bomb Model} and \textbf{Displacement Model}. To avoid complex analysis, we treat each object-block as a particle, as defined in Definition \ref{def:particle}.

\begin{definition}
\label{def:particle}
\textbf{(Particle and Mass)} For $\forall\mathcal{B}_i\subset X$, let $\mathcal{B}_i=\left\{x_{i_1},x_{i_2},\cdots,x_{i_Z}\right\}$, in which $i_1,i_2,\cdots,i_Z\in\{1,2,\cdots,N\}$. The particle of $\mathcal{B}_i$ is denoted as $\mathcal{B}_i^\dag$, $\mathcal{B}_i^\dag=\frac{\sum_{z=1}^{Z}x_{i_z}}{Z}$. The mass of $\mathcal{B}_i^\dag$ is the same as that of $\mathcal{B}_i$, and is still denoted as $\mathcal{M}_i$. In other words, $\mathcal{B}_i^\dag$ is essentially a point with mass, and it serves as a substitute for $\mathcal{B}_i$.
\end{definition}

\begin{example}
\label{exa:particle}
\textbf{(Particle and Mass)} For $X\subset R^2$, if its first object-block $\mathcal{B}_1=\{\langle 1,2\rangle,\langle 1,3\rangle\}$, then $\mathcal{B}_1^\dag=\langle\frac{1+1}{2},\frac{2+3}{2}\rangle=\langle 1,2.5\rangle$. Since $\mathcal{B}_1$ contains 2 objects, the mass of $\mathcal{B}_1^\dag$ is 2, i.e., $\mathcal{M}_1=2$.
\end{example}

\textbf{Virtual Bomb Model.} The virtual bomb is denoted as $\Theta$. In Definition \ref{def:shock force}, we define the explosion shock force exerted on each particle during the explosion of the virtual bomb $\Theta$, as detailed in Examples \ref{exa:Constant G} and \ref{exa:Virtual bomb and explosion shock force}. In Theorem \ref{the:bomb location}, we prove that the closer the virtual bomb $\Theta$ is to the centroid of the particles, the smaller the sum of squared distances between the particles and the virtual bomb $\Theta$. According to Definition \ref{def:shock force}, the explosion shock force is inversely proportional to the squared distance between the particle and  the virtual bomb $\Theta$. Therefore, the closer the virtual bomb $\Theta$ is to the centroid of the particles, the greater the total impact of the explosion on the dataset. To achieve the optimal explosive effect, we place the virtual bomb $\Theta$ at the centroid of the particles, \emph{i.e.}, 

\begin{equation}
\Theta=\frac{\sum_{i=1}^{C}\mathcal{B}_i^\dag}{C},
\label{eq:virtual bomb}
\end{equation}
in which $C$ is the number of object-blocks in $X$.

\begin{definition}
\label{def:shock force}
\textbf{(Explosion Shock Force)} When the virtual bomb $\Theta$ explodes, for $\forall\mathcal{B}_i^\dag$, the explosion shock force exerted on $\mathcal{B}_i^\dag$ is denoted as $\mathcal{F}_i$, $\mathcal{F}_i=G\cdot\frac{1}{\|\mathcal{B}_i^\dag-\Theta\|_2}\cdot\frac{\mathcal{B}_i^\dag-\Theta}{\|\mathcal{B}_i^\dag-\Theta\|_2}$. Specifically, $G$ is a constant that determines order of magnitude of the explosion shock force, $G=\frac{1}{N}\sum_{j=1}^{N}{\|x_j-x_{j|k}\|_2}$, in which $x_{j|k}$ is the $k$-th nearest neighbor of $x_j$, and $k$ is the input parameter in Section \ref{sec:The object-block division process}; $\frac{1}{\|\mathcal{B}_i^\dag-\Theta\|_2}$ controls the scale of the explosion shock force, the closer $\mathcal{B}_i^\dag$ is to $\Theta$, the larger the explosion shock force exerted on $\mathcal{B}_i^\dag$; $\frac{\mathcal{B}_i^\dag-\Theta}{\|\mathcal{B}_i^\dag-\Theta\|_2}$ determines the direction of the explosion shock force, pointing from $\Theta$ to $\mathcal{B}_i^\dag$.
\end{definition}

\begin{example}
\label{exa:Constant G}
\textbf{(Constant G)} For $X\subset R^3$, $X=\left\{x_1,x_2,x_3,x_4\right\}$, where $x_1=\langle 1,0,0\rangle$, $x_2=\langle 2,0,0\rangle$, $x_3=\langle 3,0,0\rangle$, and $x_4=\langle 4,0,0\rangle$. If $k=2$, then $x_{1|k}=x_3$, $x_{2|k}=x_4$, $x_{3|k}=x_1$, $x_{4|k}=x_2$. Thus, according to Definition \ref{def:shock force}, $G=\frac{1}{4}\sum_{j=1}^{4}{\|x_j-x_{j|k}\|_2}=\frac{\sqrt{2}}{4}$.
\end{example}

\begin{figure}[h]
  \centering
  \includegraphics[width=3in]{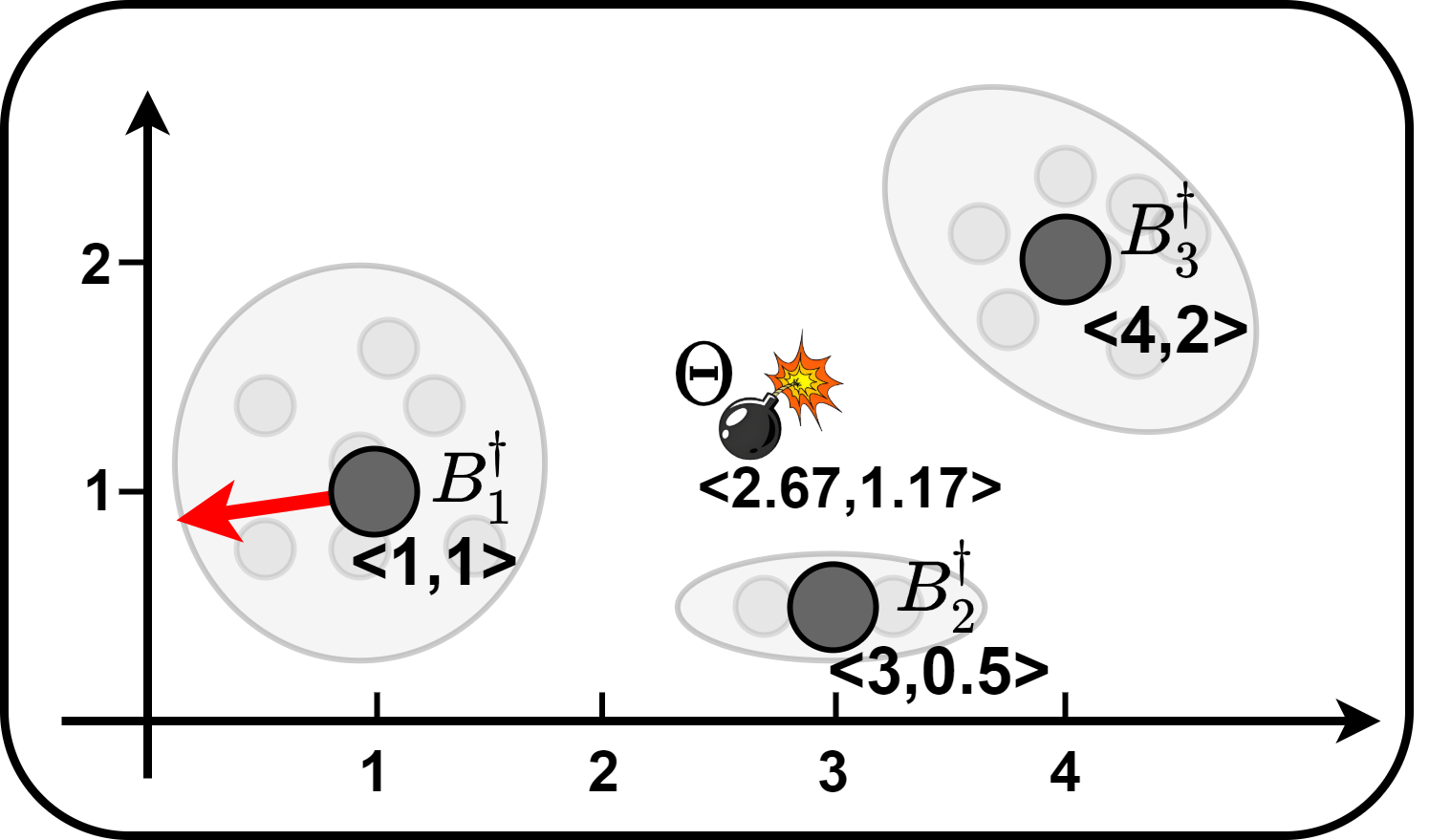}
  \caption{An example of virtual bomb and explosion shock force.}
  \label{fig:shock_force}
\end{figure}

\begin{example}
\label{exa:Virtual bomb and explosion shock force}
\textbf{(Virtual Bomb and Explosion Shock Force)} For $X\subset R^2$ with three object-blocks, the particles of object-blocks are $\mathcal{B}_1^\dag=\langle 1,1\rangle$, $\mathcal{B}_2^\dag=\langle 3,0.5\rangle$, $\mathcal{B}_3^\dag=\langle 4,2\rangle$, as shown in Figure \ref{fig:shock_force}. According to the formula (\ref{eq:virtual bomb}), the virtual bomb $\Theta=\frac{\langle 1,1\rangle +\langle3,0.5\rangle +\langle 4,2\rangle}{3}=\langle \frac{8}{3}, \frac{3.5}{3}\rangle=\langle 2.67,1.17\rangle$. Let $G=5$, according to Definition \ref{def:shock force}, the explosion shock force exerted on $\mathcal{B}_1^\dag$ is $\mathcal{F}_1=5\cdot\frac{1}{\| \langle 1,1\rangle-\langle 2.67,1.17\rangle\|_2}\cdot\frac{\langle 1,1\rangle-\langle 2.67,1.17\rangle}{\| \langle 1,1\rangle-\langle 2.67,1.17\rangle\|_2}=\langle -2.97,-0.3\rangle$. The direction of $\mathcal{F}_1$ is illustrated by the red arrow in Figure \ref{fig:shock_force}.
\end{example}

\begin{theorem}
\label{the:bomb location}
If $\|\Theta^\star -\frac{\sum_{i=1}^C{\mathcal{B}_i^\dag}}{C}\|_2<\|\Theta^\diamond -\frac{\sum_{i=1}^C{\mathcal{B}_i^\dag}}{C}\|_2$, then $\sum_{i=1}^C{\|\mathcal{B}_i^\dag -\Theta^\star\|_2^2}<\sum_{i=1}^C{\|\mathcal{B}_i^\dag -\Theta^\diamond\|_2^2}$.
\end{theorem}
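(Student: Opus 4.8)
The plan is to reduce the statement to the elementary fact that the sum of squared distances from a fixed finite point set to a variable point is a strictly increasing function of the distance from that point to the centroid of the set. Write $\mu=\frac{1}{C}\sum_{i=1}^{C}\mathcal{B}_i^\dag$ for the centroid of the particles; by Eq.~(\ref{eq:virtual bomb}), $\mu$ is exactly the location at which OSD places the virtual bomb. The hypothesis is then $\|\Theta^\star-\mu\|_2<\|\Theta^\diamond-\mu\|_2$.

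First I would establish the decomposition identity: for an arbitrary point $\Theta\in R^d$,
\[
\sum_{i=1}^{C}\|\mathcal{B}_i^\dag-\Theta\|_2^2=\sum_{i=1}^{C}\|\mathcal{B}_i^\dag-\mu\|_2^2+C\,\|\mu-\Theta\|_2^2 .
\]
To obtain this, write $\mathcal{B}_i^\dag-\Theta=(\mathcal{B}_i^\dag-\mu)+(\mu-\Theta)$, expand $\|\mathcal{B}_i^\dag-\Theta\|_2^2$ via the inner product, and sum over $i$. The cross term equals $2(\mu-\Theta)\cdot\sum_{i=1}^{C}(\mathcal{B}_i^\dag-\mu)$, and $\sum_{i=1}^{C}(\mathcal{B}_i^\dag-\mu)=\sum_{i=1}^{C}\mathcal{B}_i^\dag-C\mu=0$ by the very definition of $\mu$, so the cross term vanishes and the identity follows.

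Then I would close the argument: in this identity the first term on the right is independent of $\Theta$ and $C$ is a fixed positive integer, so $\Theta\mapsto\sum_{i=1}^{C}\|\mathcal{B}_i^\dag-\Theta\|_2^2$ is a strictly increasing function of $\|\mu-\Theta\|_2$. Instantiating at $\Theta=\Theta^\star$ and $\Theta=\Theta^\diamond$ and invoking the hypothesis $\|\Theta^\star-\mu\|_2<\|\Theta^\diamond-\mu\|_2$ yields $\sum_{i=1}^{C}\|\mathcal{B}_i^\dag-\Theta^\star\|_2^2<\sum_{i=1}^{C}\|\mathcal{B}_i^\dag-\Theta^\diamond\|_2^2$, which is the claim.

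I do not expect a genuine obstacle here; this is a bias--variance-type decomposition and everything is routine. The only point demanding care is the bookkeeping in expanding the squared norm and making explicit that $\sum_{i=1}^{C}(\mathcal{B}_i^\dag-\mu)=0$ is precisely what annihilates the cross term — that is the single place where the choice of $\Theta$ as the centroid in Eq.~(\ref{eq:virtual bomb}) actually enters. (Alternatively, one could note that $\Theta\mapsto\sum_i\|\mathcal{B}_i^\dag-\Theta\|_2^2$ is a convex quadratic whose gradient vanishes only at $\mu$, but the explicit decomposition above is cleaner and fully self-contained.)
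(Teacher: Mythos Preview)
Your proof is correct and in fact tighter than the paper's, but the route differs. The paper treats $g(y)=\sum_{i=1}^{C}\|\mathcal{B}_i^\dag-y\|_2^2$ as a multivariable function, expands in coordinates, computes the gradient $\partial g/\partial y_j=2\sum_i(y_j-\boxed{\mathcal{B}_i^\dag|j})$, sets it to zero to locate the critical point at the centroid $\mu$, and then computes the Hessian $H(g)=2C\cdot I$ to certify positive definiteness and hence a global minimum at $\mu$. From there it jumps directly to the desired inequality. Your argument instead proves the explicit identity $\sum_i\|\mathcal{B}_i^\dag-\Theta\|_2^2=\sum_i\|\mathcal{B}_i^\dag-\mu\|_2^2+C\|\mu-\Theta\|_2^2$ by expanding $(\mathcal{B}_i^\dag-\mu)+(\mu-\Theta)$ and killing the cross term with $\sum_i(\mathcal{B}_i^\dag-\mu)=0$. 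This is the alternative you mention parenthetically, but it buys something real: it shows \emph{directly} that $g$ depends on $\Theta$ only through $\|\mu-\Theta\|_2$, so strict monotonicity in that distance is immediate. The paper's Hessian computation, by contrast, literally establishes only that $\mu$ is the unique minimizer; the final strict inequality for two arbitrary points $\Theta^\star,\Theta^\diamond$ tacitly relies on the radial symmetry encoded in $H(g)$ being a scalar multiple of the identity, a step the paper does not spell out. Your decomposition closes that gap and avoids calculus entirely.
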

\begin{proof}
Let $g(y)=\sum_{i=1}^{C}\|\mathcal{B}_{i}^{\dag}-y\|_{2}^{2}$. Let $X\subset R^d$, so $y=\langle \boxed{y|1}, \boxed{y|2}, \cdots, \boxed{y|d}\rangle$, in which $\boxed{y|2}$ is the value on the second dimensional feature of $y$. Similarly, $\mathcal{B}_{i}^{\dag}=\langle\boxed{\mathcal{B}_{i}^{\dag}|1}, \boxed{\mathcal{B}_{i}^{\dag}|2}, \cdots, \boxed{\mathcal{B}_{i}^{\dag}|d}\rangle$. Obviously, we can rewrite $g(y)$ as $g(\boxed{y|1}, \boxed{y|2}, \cdots, \boxed{y|d})=\sum_{i=1}^{C} \sum_{j=1}^{d}(\boxed{\mathcal{B}_{i}^{\dag}|j}-\boxed{y|j})^{2}$. Through differentiation, for $\forall j\leq d$, $\frac{\partial g}{\partial \boxed{y|j}}=2\sum_{i=1}^{C}(\boxed{y|j}-\boxed{\mathcal{B}_{i}^{\dag}|j})$. If $\left(\begin{matrix}\frac{\partial g}{\partial\fbox{$y\left|1\right.$}}\\\vdots\\\frac{\partial g}{\partial\fbox{$y\left|d\right.$}}\\\end{matrix}\right)=0$, then $\left(\begin{matrix}\fbox{$y\left|1\right.$}\\\vdots\\\fbox{$y\left|d\right.$}\\\end{matrix}\right)=\left(\begin{matrix}\frac{\sum_{i=1}^{C}\fbox{$\mathcal{B}_i^\dag\left|1\right.$}}{C}\\\vdots\\\frac{\sum_{i=1}^{C}\fbox{$\mathcal{B}_i^\dag\left|d\right.$}}{C}\\\end{matrix}\right)$. Hessian Matrix of $g(y)$ is $H\left(g\right)=\left[\begin{matrix}\frac{\partial^2g}{\partial{\fbox{$y\left|1\right.$}}^2}&\ldots&\frac{\partial^2g}{\partial\fbox{$y\left|1\right.$}\fbox{$y\left|d\right.$}}\\\vdots&\ddots&\vdots\\\frac{\partial^2g}{\partial\fbox{$y\left|d\right.$}\fbox{$y\left|1\right.$}}&\ldots&\frac{\partial^2g}{\partial{\fbox{$y\left|d\right.$}}^2}\\\end{matrix}\right]=\left[\begin{matrix}\begin{matrix}2C&0\\0&2C\\\end{matrix}&\begin{matrix}\ldots&0\\\ldots&0\\\end{matrix}\\\begin{matrix}\vdots&\vdots\\0&0\\\end{matrix}&\begin{matrix}\ddots&\vdots\\\ldots&2C\\\end{matrix}\\\end{matrix}\right]$. We can derive the following:\\
$\because C>0$, \emph{i.e.}, the leading principal minors of $H(g)$ are all greater than 0.\\
$\therefore$ When $\left(\begin{matrix}\fbox{$y\left|1\right.$}\\\vdots\\\fbox{$y\left|d\right.$}\\\end{matrix}\right)=\left(\begin{matrix}\frac{\sum_{i=1}^{C}\fbox{$\mathcal{B}_i^\dag\left|1\right.$}}{C}\\\vdots\\\frac{\sum_{i=1}^{C}\fbox{$\mathcal{B}_i^\dag\left|d\right.$}}{C}\\\end{matrix}\right)=\frac{\sum_{i=1}^{C}\mathcal{B}_i^\dag}{C}$, $H(g)$ is positive definite.\\
$\therefore$ $g(y)$ achieves its minimum at $\frac{\sum_{i=1}^{C} \mathcal{B}_{i}^{\dag}}{C}$.\\
$\therefore$ $\sum_{i=1}^C{\|\mathcal{B}_i^\dag -\Theta^\star\|_2^2}<\sum_{i=1}^C{\|\mathcal{B}_i^\dag -\Theta^\diamond\|_2^2}$.
\end{proof}

\textbf{Displacement Model.} We use $\mathcal{B}_i^\dag$ as an example to illustrate its movement process under the explosion shock force. According to physical principles \cite{mansfield2020understanding}, during the explosion, the impulse $I_i$ ($I_i=\mathcal{F}_i\cdot T$, where $T$ is the explosion duration) gained by $\mathcal{B}_i^\dag$ is entirely converted into the momentum $P_i$ ($P_i=\mathcal{M}_i\cdot V_i$, where $V_i$ is the velocity). That is, $\mathcal{F}_i\cdot T=\mathcal{M}_i\cdot V_i$. Therefore, the explosion imparts an initial velocity $V_i=\frac{\mathcal{F}_i\cdot T}{\mathcal{M}_i}$ to $\mathcal{B}_i^\dag$. To simulate the explosion scenario in the real world, we assume that $\mathcal{B}_i^\dag$ is subject to a constant friction force (denoted as $f_i$), with a friction coefficient $\mu$. Clearly, under the influence of the friction force, $\mathcal{B}_i^\dag$ will undergo uniform deceleration motion until its velocity becomes 0. According to physical principles, $f_i=\mathcal{M}_i\cdot\mu$, so the acceleration of $\mathcal{B}_i^\dag$ in uniform deceleration motion is $a=\frac{f_i}{\mathcal{M}_i}=\frac{\mathcal{M}_i\cdot\mu}{\mathcal{M}_i}=\mu$. Therefore, it can be inferred that the duration of uniform deceleration motion of $\mathcal{B}_i^\dag$ is $t_i=\frac{V_i}{a}=\frac{V_i}{\mu}=\frac{\mathcal{F}_i\cdot T}{\mathcal{M}_i\cdot\mu}$. According to the displacement formula of uniform deceleration motion \cite{mansfield2020understanding}, the displacement of $\mathcal{B}_i^\dag$ under the explosion shock force is $S_i=V_i\cdot t_i-\frac{1}{2}a\cdot{t_i}^2=\frac{{\mathcal{F}_i}^2\cdot T^2}{2\mu{\cdot\mathcal{M}_i}^2}$. The feature explosion is not a real physical process, and $\mu$ is virtual, so we set $\mu$ to 0.5 to eliminate the coefficient in the denominator. Ultimately, the displacement of $\mathcal{B}_i^\dag$ is
\begin{equation}
S_i=\frac{{\mathcal{F}_i}^2\cdot T^2}{{\mathcal{M}_i}^2}.
\label{eq:displacement}
\end{equation}
Since $\mathcal{B}_i^\dag$ serves as a substitute for $\mathcal{B}_i$, the displacement of each object in $\mathcal{B}_i$ is also $S_i$. Therefore, for $\forall x_j\in\mathcal{B}_i$, after the explosion, $x_j$ will transform into $\overline{\overline{x_j}}$, 
\begin{equation}
\overline{\overline{x_j}}=x_j+S_i=x_j+\frac{{\mathcal{F}_i}^2\cdot T^2}{{\mathcal{M}_i}^2}.
\label{eq:object after explosion}
\end{equation}
We denote the dataset after the explosion as $\overline{\overline{X}}$. Finally, according to the formula (\ref{eq:object after explosion}), we update $\mathcal{B}_i$ to $\overline{\overline{\mathcal{B}_i}}$, and recalculate the particle of $\overline{\overline{\mathcal{B}_i}}$ with reference to Definition \ref{def:particle}, which is denoted as ${\overline{\overline{\mathcal{B}_i}}}^\dag$. The mass of $\overline{\overline{\mathcal{B}_i}}$ (and ${\overline{\overline{\mathcal{B}_i}}}^\dag$) is the same as that of $\mathcal{B}_i$ (and $\mathcal{B}_i^\dag$), that is $\overline{\overline{\mathcal{M}_i}}=\mathcal{M}_i$. Example \ref{exa:Feature explosion} shows an example of feature explosion. 

We prove through Theorem \ref{the:far} and Theorem \ref{the:sparse} that, compared with $X$, $\overline{\overline{X}}$ has the following advantages: 
\begin{remark}
\label{rem:Far distance}
\textbf{(Far Distance)} Theorem \ref{the:far} proves that in $X$, if the distance between an outlier $x_i$ and the virtual bomb $\Theta$ is close to the distance between a normal object $x_j$ and the virtual bomb $\Theta$, then in $\overline{\overline{X}}$, $x_i$ will be farther away from $\Theta$ than  $x_j$. Therefore, in $\overline{\overline{X}}$, most of outliers will be far away from normal objects, which makes it easier for the outlier detection algorithms to detect outliers.
\end{remark}

\begin{remark}
\label{rem:Sparser}
\textbf{(Sparser)} As is well known, sparsity is an important criterion that distinguishes outliers from normal objects \cite{panjei2022survey}. Theorem \ref{the:sparse} proves that the blocks-objects that are close to each other in $X$ will become far away from each other in $\overline{\overline{X}}$. Therefore, after the explosion, the distribution of outliers will become sparser (Note: normal objects are still dense because they are embedded in dense object-blocks, see Theorem \ref{the:mass} for details). Hence, the feature explosion is beneficial to detecting outliers.
\end{remark}

\begin{example}
\label{exa:Feature explosion}
\textbf{(Feature Explosion)} For $X \subset R^{3}$, in which $\mathcal{B}_{4}=\left\{x_{3}, x_{7}, x_{9}\right\}, x_{3}=\langle 1,4,2\rangle, x_{7}=\langle 0,3,5\rangle, x_{3}=\langle-1,7,2\rangle$. Let the explosion shock force $\mathcal{F}_{4}$ exerted on $\mathcal{B}_{4}^{\dagger}$ be $\langle 3,2,1\rangle$, and let $T=1$. Then after the explosion, the displacement $S_{4}$ of $\mathcal{B}_{4}^{\dagger}$ is $\frac{\langle 3,2,1\rangle^{2} \cdot 1^{2}}{3^{2}}=\left\langle 1, \frac{4}{9}, \frac{1}{9}\right\rangle$. As a result, $\overline{\overline{x_{3}}}=\langle 1,4,2\rangle+$ $\left\langle 1, \frac{4}{9}, \frac{1}{9}\right\rangle=\left\langle 2, \frac{40}{9}, \frac{19}{9}\right\rangle$. Similarly, $\overline{\overline{x_{7}}}=\left\langle 1, \frac{31}{9}, \frac{46}{9}\right\rangle, \overline{\overline{x_{9}}}=\left\langle 0, \frac{67}{9}, \frac{19}{9}\right\rangle$. According to Definition \ref{def:particle}, $\overline{\overline{\mathcal{B}_4}}^{\dagger}=\left\langle\frac{2+1+0}{3}, \frac{\frac{40}{9}+\frac{31}{9}+\frac{67}{9}}{3}, \frac{\frac{19}{9}+\frac{46}{9}+\frac{19}{9}}{3}\right\rangle=\left\langle 1, \frac{138}{27}, \frac{84}{27}\right\rangle, \overline{\overline{\mathcal{M}_{4}}}=3$.
\end{example}

\begin{theorem}
\label{the:far}
For $\forall x_i, x_j\in X$, if $x_i$ is an outlier and $x_j$ is a normal object, and $\left\|x_i-\Theta\right\|_2 \approx$ $\left\|x_j-\Theta\right\|_2$, then $\left\|\overline{\overline{x_i}}-\Theta\right\|_2>\left\|\overline{\overline{x_j}}-\Theta\right\|_2$.
\end{theorem}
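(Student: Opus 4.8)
The plan is to pit the post-explosion displacement of $x_i$'s object-block against that of $x_j$'s and show the outlier's displacement dominates so decisively that it pushes $x_i$ strictly farther from $\Theta$. First I would fix the block memberships: say $x_i\in\mathcal{B}_a$ and $x_j\in\mathcal{B}_b$. By Theorem~\ref{the:diffrent block} we have, with high probability, $a\ne b$, with $\mathcal{B}_a$ an object-block of outliers and $\mathcal{B}_b$ one of normal objects; Theorem~\ref{the:mass} then gives $\mathcal{M}_a<\mathcal{M}_b$, and the mechanism in its proof (normal objects aggregate on a large scale, outliers do not) upgrades this to $\mathcal{M}_a\ll\mathcal{M}_b$.

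Next I would unfold the explosion update~(\ref{eq:object after explosion}) as $\overline{\overline{x_i}}-\Theta=(x_i-\Theta)+S_a$ and $\overline{\overline{x_j}}-\Theta=(x_j-\Theta)+S_b$, with $S_i=\mathcal{F}_i^{2}T^{2}/\mathcal{M}_i^{2}$. Using Definition~\ref{def:shock force}, $\|\mathcal{F}_i\|_2=G/\|\mathcal{B}_i^\dag-\Theta\|_2$, so $\|S_i\|_2$ is of order $G^{2}T^{2}\big/\!\big(\mathcal{M}_i^{2}\,\|\mathcal{B}_i^\dag-\Theta\|_2^{2}\big)$ and hence $\|S_b\|_2/\|S_a\|_2$ is of order $(\mathcal{M}_a/\mathcal{M}_b)^{2}\big(\|\mathcal{B}_a^\dag-\Theta\|_2/\|\mathcal{B}_b^\dag-\Theta\|_2\big)^{2}$. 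The hypothesis $\|x_i-\Theta\|_2\approx\|x_j-\Theta\|_2$, combined with the observation that a block's particle lies among its members and is therefore essentially equidistant from $\Theta$ as they are, makes the second factor $\approx 1$, so $\|S_b\|_2\ll\|S_a\|_2$; the same estimate shows $\|S_a\|_2$ is large relative to both $\|x_i-\Theta\|_2$ and $\|x_j-\Theta\|_2$ (since $\mathcal{M}_a$ is a tiny integer and $G$ is a typical neighbour distance). I would then finish with the triangle inequality in opposite directions: $\|\overline{\overline{x_i}}-\Theta\|_2\ge\|S_a\|_2-\|x_i-\Theta\|_2$ and $\|\overline{\overline{x_j}}-\Theta\|_2\le\|x_j-\Theta\|_2+\|S_b\|_2$, and since $\|S_a\|_2$ swamps $\|x_i-\Theta\|_2+\|x_j-\Theta\|_2+\|S_b\|_2$ the claimed strict inequality drops out. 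A shorter, more heuristic route closer to the paper's own style is to note that $S_i$ is pushed along $\mathcal{F}_i$, which by Definition~\ref{def:shock force} points radially away from $\Theta$, treat $S_i$ as collinear with $x_i-\Theta$ so the norms add, and conclude directly from $\|x_i-\Theta\|_2\approx\|x_j-\Theta\|_2$ and $\|S_a\|_2>\|S_b\|_2$.

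The step I expect to be the real obstacle is the geometry of $S_i=\mathcal{F}_i^{2}T^{2}/\mathcal{M}_i^{2}$: the square on $\mathcal{F}_i$ is componentwise (cf.\ Example~\ref{exa:Feature explosion}), so $S_i$ points into the positive orthant, not radially outward, and ``both objects move away from $\Theta$, the outlier more'' is not literally true --- the collinearity shortcut above is only a heuristic. A rigorous argument must therefore rest entirely on the magnitude gap $\mathcal{M}_a\ll\mathcal{M}_b$ being large enough to dominate every cross term, and on an auxiliary estimate $\|\mathcal{B}_i^\dag-\Theta\|_2\approx\|x-\Theta\|_2$ for $x\in\mathcal{B}_i$ (which can fail when $x_j$ sits at the rim of a large normal cluster); both enter only through the ``with high probability / typical dataset'' qualifiers the rest of the paper already relies on, rather than as exact inequalities.
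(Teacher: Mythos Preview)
Your ``shorter, more heuristic route'' is exactly the paper's proof: it sets $x_i\in\mathcal{B}_\star$, $x_j\in\mathcal{B}_\diamond$, asserts that from $\|x_i-\Theta\|_2\approx\|x_j-\Theta\|_2$ one gets $|\mathcal{F}_\star|\approx|\mathcal{F}_\diamond|$ (silently identifying object distance with particle distance), invokes Theorems~\ref{the:diffrent block} and~\ref{the:mass} to get $\mathcal{M}_\star<\mathcal{M}_\diamond$, concludes $|S_\star|>|S_\diamond|$, and then declares the result---implicitly treating the displacements as radial so that norms add.

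Your longer route via the two-sided triangle inequality is more careful than the paper's own argument and is not needed to match it. More to the point, the two subtleties you flag---that the componentwise square in $S_i=\mathcal{F}_i^{2}T^{2}/\mathcal{M}_i^{2}$ sends $S_i$ into the positive orthant rather than along the ray from $\Theta$, and that $\|\mathcal{B}_i^\dag-\Theta\|_2\approx\|x-\Theta\|_2$ is an auxiliary assumption which can fail at the rim of a large cluster---are genuine gaps that the paper simply does not address. The paper reduces the claim to $|S_\star|>|S_\diamond|$ without justifying why that reduction is legitimate; you are right that it is only a heuristic, and your triangle-inequality version (requiring $\mathcal{M}_a\ll\mathcal{M}_b$ rather than merely $\mathcal{M}_a<\mathcal{M}_b$) is what a rigorous statement would actually need.
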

\begin{proof}
Let $x_{i}\in\mathcal{B}_{\star}$ and $x_{j}\in \mathcal{B}_\diamond$.\\
$\therefore$ According to the formula (\ref{eq:object after explosion}), $\overline{\overline{x_i}}=x_{i}+S_{\star}, \overline{\overline{x_j}}=x_{j}+S_{\diamond}$.\\
$\because\left\|x_{i}-\Theta\right\|_{2} \approx\left\|x_{j}-\Theta\right\|_{2}$.\\
$\therefore$ To prove $\left\|\overline{\overline{x_i}}-\Theta\right\|_{2}>\left\|\overline{\overline{x_{j}}}-\Theta\right\|_{2}$, it suffices to prove that $\left|S_{\star}\right|>\left|S_{\diamond}\right|$.\\
$\because\left\|x_{i}-\Theta\right\|_{2} \approx\left\|x_{j}-\Theta\right\|_{2}$.\\
$\therefore$ According to Definition \ref{def:shock force},$\left|\mathcal{F}_{\star}\right| \approx\left|\mathcal{F}_{\diamond}\right|$.\\
$\because$ According to Theorem \ref{the:diffrent block} and Theorem \ref{the:mass}, $\mathcal{M}_{\star}<\mathcal{M}_{\diamond}$.\\
$\therefore\left|S_{\star}\right|=\left|\frac{\mathcal{F}_{\star}^{2} \cdot T^{2}}{\mathcal{M}_{\star}^{2}}\right|>\left|\frac{\mathcal{F}_{\diamond}^{2} \cdot T^{2}}{\mathcal{M}_{\diamond}^{2}}\right|=\left|S_{\diamond}\right|$.\\
$\therefore\left\|\overline{\overline{x_i}}-\Theta\right\|_{2}>\left\|\overline{\overline{x_j}}-\Theta\right\|_{2}$.
\end{proof}

\begin{theorem}
\label{the:sparse}
If $\mathcal{B}_i^\dag$ and $\mathcal{B}_j^\dag$ are close to each other, then $\|\mathcal{B}_i^\dag-\mathcal{B}_j^\dag\|_2 <\|\overline{\overline{\mathcal{B}_i}}^{\dagger}-\overline{\overline{\mathcal{B}_j}}^{\dagger}\|_2$.
\end{theorem}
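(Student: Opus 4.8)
The plan is to reduce the statement to an estimate on the two translation vectors produced by the explosion, and then to exploit that each such translation is directed radially outward from the virtual bomb $\Theta$.

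First I would record the exact effect of the explosion on a particle. By the displacement model~\eqref{eq:object after explosion} every object of $\mathcal{B}_i$ is shifted by the \emph{same} vector $S_i$, so the centroid is shifted by that vector as well: $\overline{\overline{\mathcal{B}_i}}^{\dagger}=\mathcal{B}_i^{\dagger}+S_i$, and likewise $\overline{\overline{\mathcal{B}_j}}^{\dagger}=\mathcal{B}_j^{\dagger}+S_j$. Writing $a=\mathcal{B}_i^{\dagger}-\Theta$, $b=\mathcal{B}_j^{\dagger}-\Theta$, $\mathbf{v}=a-b=\mathcal{B}_i^{\dagger}-\mathcal{B}_j^{\dagger}$ and $\mathbf{w}=S_i-S_j$, the claim $\|\overline{\overline{\mathcal{B}_i}}^{\dagger}-\overline{\overline{\mathcal{B}_j}}^{\dagger}\|_2>\|\mathcal{B}_i^{\dagger}-\mathcal{B}_j^{\dagger}\|_2$ becomes $\|\mathbf{v}+\mathbf{w}\|_2>\|\mathbf{v}\|_2$, i.e.\ $2\,\mathbf{v}\cdot\mathbf{w}+\|\mathbf{w}\|_2^{2}>0$.

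Next I would make the geometry of $\mathbf{w}$ explicit. By Definition~\ref{def:shock force} the shock force $\mathcal{F}_i$ is a positive scalar multiple of $a$, hence $S_i$ points from $\Theta$ towards $\mathcal{B}_i^{\dagger}$; together with its magnitude $\|S_i\|_2=T^{2}\|\mathcal{F}_i\|_2^{2}/\mathcal{M}_i^{2}=T^{2}G^{2}/(\mathcal{M}_i^{2}\|a\|_2^{2})$ this gives $S_i=\lambda_i\,a$ with $\lambda_i=T^{2}G^{2}/(\mathcal{M}_i^{2}\|a\|_2^{3})>0$, and similarly $S_j=\lambda_j\,b$. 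A one-line computation then yields
\begin{equation}
\mathbf{v}\cdot\mathbf{w}=(a-b)\cdot(\lambda_i a-\lambda_j b)=\lambda_i\|a\|_2^{2}+\lambda_j\|b\|_2^{2}-(\lambda_i+\lambda_j)\,a\cdot b.
\label{eq:sparse-key}
\end{equation}
Since $\mathcal{B}_i^{\dagger}$ and $\mathcal{B}_j^{\dagger}$ are close, $\|a\|_2\approx\|b\|_2=:r$ and the angle $\theta$ they subtend at $\Theta$ is small but nonzero, so $a\cdot b=\|a\|_2\|b\|_2\cos\theta<\|a\|_2\|b\|_2$; substituting into \eqref{eq:sparse-key} gives $\mathbf{v}\cdot\mathbf{w}\approx(\lambda_i+\lambda_j)\,r^{2}(1-\cos\theta)>0$, whence $2\,\mathbf{v}\cdot\mathbf{w}+\|\mathbf{w}\|_2^{2}>0$ and the theorem follows. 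Equivalently, in the symmetric case $\lambda_i\approx\lambda_j=:\lambda$ the explosion restricted to these two blocks is the dilation $\mathcal{B}^{\dagger}\mapsto\Theta+(1+\lambda)(\mathcal{B}^{\dagger}-\Theta)$ about $\Theta$ with ratio $1+\lambda>1$, which strictly expands every distance.

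The delicate point, which I expect to be the main obstacle, is the near-degenerate configuration in which $\Theta$, $\mathcal{B}_i^{\dagger}$ and $\mathcal{B}_j^{\dagger}$ are (almost) collinear: then $a\cdot b\approx\|a\|_2\|b\|_2$, and in that limit \eqref{eq:sparse-key} reduces to $\mathbf{v}\cdot\mathbf{w}=(\|a\|_2-\|b\|_2)(\|S_i\|_2-\|S_j\|_2)$, which is $\le 0$ when the two blocks have equal mass (since $\|S\|_2$ is then decreasing in the distance to $\Theta$). To absorb this I would fall back on the so-far-unused term $\|\mathbf{w}\|_2^{2}=\|S_i-S_j\|_2^{2}$: comparing $(\|S_i\|_2-\|S_j\|_2)^{2}$ with $2(\|a\|_2-\|b\|_2)(\|S_j\|_2-\|S_i\|_2)$ shows that $2\,\mathbf{v}\cdot\mathbf{w}+\|\mathbf{w}\|_2^{2}>0$ still holds provided the explosion is large enough that $\|S_i\|_2>\|a\|_2$ (a particle is thrown farther than its distance to the bomb), which is the regime the constant $G=\tfrac1N\sum_j\|x_j-x_{j|k}\|_2$ is chosen to produce; alternatively one may simply declare the collinear configuration non-generic, as is done implicitly for the other theorems in this section. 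Everything else is a routine expansion of $\|\mathbf{v}+\mathbf{w}\|_2^{2}$.
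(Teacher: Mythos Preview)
Your proposal is essentially correct but proceeds along a genuinely different route from the paper. The paper gives a purely synthetic, planar-geometry argument: working in the plane through $\Theta$, $\mathcal{B}_i^{\dagger}$, $\mathcal{B}_j^{\dagger}$, it draws the line through $\overline{\overline{\mathcal{B}_j}}^{\dagger}$ parallel to $\mathcal{B}_i^{\dagger}\mathcal{B}_j^{\dagger}$, obtains a point $a$ on the ray $\Theta\overline{\overline{\mathcal{B}_i}}^{\dagger}$, and uses similarity of $\triangle\mathcal{B}_i^{\dagger}\Theta\mathcal{B}_j^{\dagger}$ and $\triangle a\Theta\overline{\overline{\mathcal{B}_j}}^{\dagger}$ to get $\|\mathcal{B}_i^{\dagger}-\mathcal{B}_j^{\dagger}\|_2<\|a-\overline{\overline{\mathcal{B}_j}}^{\dagger}\|_2$; a second step, dropping a perpendicular and invoking the monotonicity of $\sec$ on $[0,90^{\circ})$, upgrades this to $\|a-\overline{\overline{\mathcal{B}_j}}^{\dagger}\|_2<\|\overline{\overline{\mathcal{B}_i}}^{\dagger}-\overline{\overline{\mathcal{B}_j}}^{\dagger}\|_2$. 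Your argument instead expands $\|\mathbf{v}+\mathbf{w}\|_2^{2}$ and controls the cross term via the explicit radial form $S_i=\lambda_i a$. The paper's picture argument is shorter and more visual but leans on an unstated angle hypothesis ($\angle b\overline{\overline{\mathcal{B}_j}}^{\dagger}\overline{\overline{\mathcal{B}_i}}^{\dagger}<90^{\circ}$), which is where the ``close to each other'' assumption is silently consumed; your approach makes that dependence quantitative through the $(1-\cos\theta)$ factor, works directly in $\mathbb{R}^d$ without reducing to a plane, and is honest about the near-collinear degeneracy that the paper's figure simply excludes. Both proofs rely on the displacement being radial from $\Theta$, so your parametrisation $S_i=\lambda_i a$ is exactly the model the paper's own geometric proof presupposes.
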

\begin{proof}
We use geometry to prove this theorem. As shown in Figure \ref{fig:geometry}, we draw a line parallel to $\mathcal{B}_i^\dag\mathcal{B}_j^\dag$ from ${\overline{\overline{\mathcal{B}_j}}}^\dag$ (if ${\overline{\overline{\mathcal{B}_i}}}^\dag$ is closer to $\Theta$, then draw the parallel line from ${\overline{\overline{\mathcal{B}_i}}}^\dag$), and this line intersects $\Theta{\overline{\overline{\mathcal{B}_i}}}^\dag$ at point $a$. In addition, we also draw a perpendicular line from ${\overline{\overline{\mathcal{B}_j}}}^\dag$ which intersects $\Theta{\overline{\overline{\mathcal{B}_i}}}^\dag$ at point $b$.\\
\begin{figure}[h]
  \centering
  \includegraphics[width=3in]{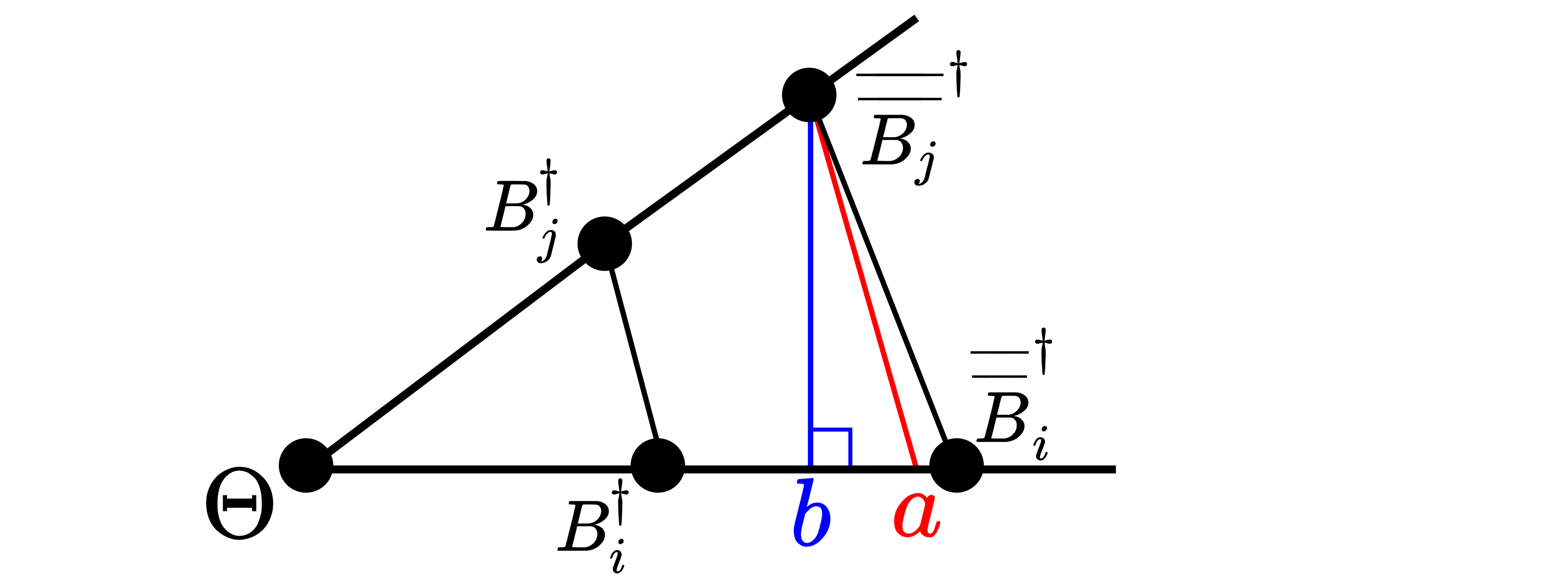}
  \caption{Geometric proof.}
  \label{fig:geometry}
\end{figure}
$\because \mathcal{B}_{i}^{\dagger} \mathcal{B}_{j}^{\dagger} \parallel a\overline{\overline{\mathcal{B}_j}}^{\dagger}$.\\
$\therefore$ According to theorems for the similarity of triangles \cite{ghyka1977geometry}, $\triangle \mathcal{B}_{i}^{\dagger}\Theta\mathcal{B}_{j}^{\dagger} \sim \triangle a\Theta\overline{\overline{\mathcal{B}_j}}^{\dagger}$.\\
$\because\left\|\Theta-\mathcal{B}_{i}^{\dagger}\right\|_{2}<\|\Theta-a\|_{2}$.\\
$\therefore\left\|\mathcal{B}_{i}^{\dagger}-\mathcal{B}_{j}^{\dagger}\right\|_{2}<\left\|a-\overline{\overline{\mathcal{B}_j}}^{\dagger}\right\|_{2}$.\\
$\because \angle b\overline{\overline{\mathcal{B}_j}}^{\dagger}a<\angle b \overline{\overline{\mathcal{B}_j}}^{\dagger}{\overline{\overline{\mathcal{B}_{i}}}}^{\dagger}<90^{\circ}$.\\
$\therefore\left|{\overline{\overline{\mathcal{B}_{j}}}}^{\dagger}b\right| \cdot \sec\angle b{\overline{\overline{\mathcal{B}_{j}}}}^{\dagger} a<\left|{\overline{\overline{\mathcal{B}_{j}}}}^{\dagger}b\right| \cdot \sec\angle b{\overline{\overline{\mathcal{B}_{j}}}}^{\dagger}{\overline{\overline{\mathcal{B}_{i}}}}^{\dagger}$, in which $\left|{\overline{\overline{\mathcal{B}_{j}}}}^{\dagger}b\right|$ is the length of ${\overline{\overline{\mathcal{B}_{j}}}}^{\dagger}b$.\\
$\therefore\left\|a-\overline{\overline{\mathcal{B}_j}}^{\dagger}\right\|_{2}<\left\|{\overline{\overline{\mathcal{B}_{i}}}}^{\dagger}-{\overline{\overline{\mathcal{B}_{j}}}}^{\dagger}\right\|_{2}$.\\
$\therefore\left\|\mathcal{B}_{i}^{\dagger}-\mathcal{B}_{j}^{\dagger}\right\|_{2}<\left\|{\overline{\overline{\mathcal{B}_{i}}}}^{\dagger}-{\overline{\overline{\mathcal{B}_{j}}}}^{\dagger}\right\|_{2}$.
\end{proof}

We describe the detailed implementation of the explosion process in Algorithm \ref{alg:The feature explosion}.
\begin{algorithm}
\caption{The Explosion Process}
\label{alg:The feature explosion}
\KwIn{$X$, $\left\{\mathcal{B}_1,\mathcal{B}_2,\cdots,\mathcal{B}_C\right\}$, $\left\{\mathcal{M}_1,\mathcal{M}_2,\cdots,\mathcal{M}_C\right\}$, $T$}
\KwOut{$\overline{\overline{X}}$, $\left\{\overline{\overline{\mathcal{B}_1}},\overline{\overline{\mathcal{B}_2}},\cdots,\overline{\overline{\mathcal{B}_C}}\right\}$, $\left\{{\overline{\overline{\mathcal{B}_1}}}^\dag,{\overline{\overline{\mathcal{B}_2}}}^\dag,\cdots,{\overline{\overline{\mathcal{B}_C}}}^\dag\right\}$,  $\left\{\overline{\overline{\mathcal{M}_1}},\overline{\overline{\mathcal{M}_2}},\cdots,\overline{\overline{\mathcal{M}_C}}\right\}$}
According to the formula (\ref{eq:virtual bomb}), calculating the virtual bomb $\Theta$.\\
\For{ $\mathcal{B}_i^\dag$ in $\left\{\mathcal{B}_1^\dag,\mathcal{B}_2^\dag,\cdots,\mathcal{B}_C^\dag\right\}$}{
According to the formula (\ref{eq:displacement}), calculating the displacement $S_i$.\\
\For{$x_j$ in $\mathcal{B}_i$}{
According to formula (\ref{eq:object after explosion}), obtaining $\overline{\overline{x_j}}$.
}
Updating $\mathcal{B}_i$ and $\mathcal{B}_i^\dag$ to obtain $\overline{\overline{\mathcal{B}_i}}$ and ${\overline{\overline{\mathcal{B}_i}}}^\dag$. $\overline{\overline{\mathcal{M}_i}}=\mathcal{M}_i$.
}
\Return{$\overline{\overline{X}}$, $\left\{\overline{\overline{\mathcal{B}_1}},\overline{\overline{\mathcal{B}_2}},\cdots,\overline{\overline{\mathcal{B}_C}}\right\}$, $\left\{{\overline{\overline{\mathcal{B}_1}}}^\dag,{\overline{\overline{\mathcal{B}_2}}}^\dag,\cdots,{\overline{\overline{\mathcal{B}_C}}}^\dag\right\}$,  $\left\{\overline{\overline{\mathcal{M}_1}},\overline{\overline{\mathcal{M}_2}},\cdots,\overline{\overline{\mathcal{M}_C}}\right\}$}
\end{algorithm}

\subsection{Repulsion Process}
\label{sec:Repulsion Process}
\textbf{Motivation.} We have proved through Theorem \ref{the:far} that when the outliers and the normal objects are at the same distance from the virtual bomb, the outliers will move farther (see Remark \ref{rem:Far distance} for details). However, according to Definition \ref{def:shock force} and the formula (\ref{eq:displacement}), the object-blocks with larger masses and farther distances from the virtual bomb will have smaller displacements. Therefore, in the same direction of movement, some object-blocks with small masses and close to the virtual bomb may catch up with those object-blocks with large masses and far from the virtual bomb. That is, in the same direction of movement, some outliers may mix into the normal objects, thus misleading the outlier detection algorithms to detect them as normal objects. To solve this problem, in $\overline{\overline{X}}$, we attempt to add repulsive forces among the object-blocks, forcing the object-blocks that are close to each other to separate.

\begin{definition}
\label{def:Invalid neighbors}
\textbf{(Invalid Neighbors)} If in $X$, $x_p$ does not belong to $k$NN neighbors of $x_g$, but in $\overline{\overline{X}}$, $\overline{\overline{x_p}}$ belongs to $k$NN neighbors of $\overline{\overline{x_g}}$, then $\overline{\overline{x_p}}$ is an invalid neighbor of $\overline{\overline{x_g}}$.
\end{definition}

\begin{figure}[h]
  \centering
  \includegraphics[width=3in]{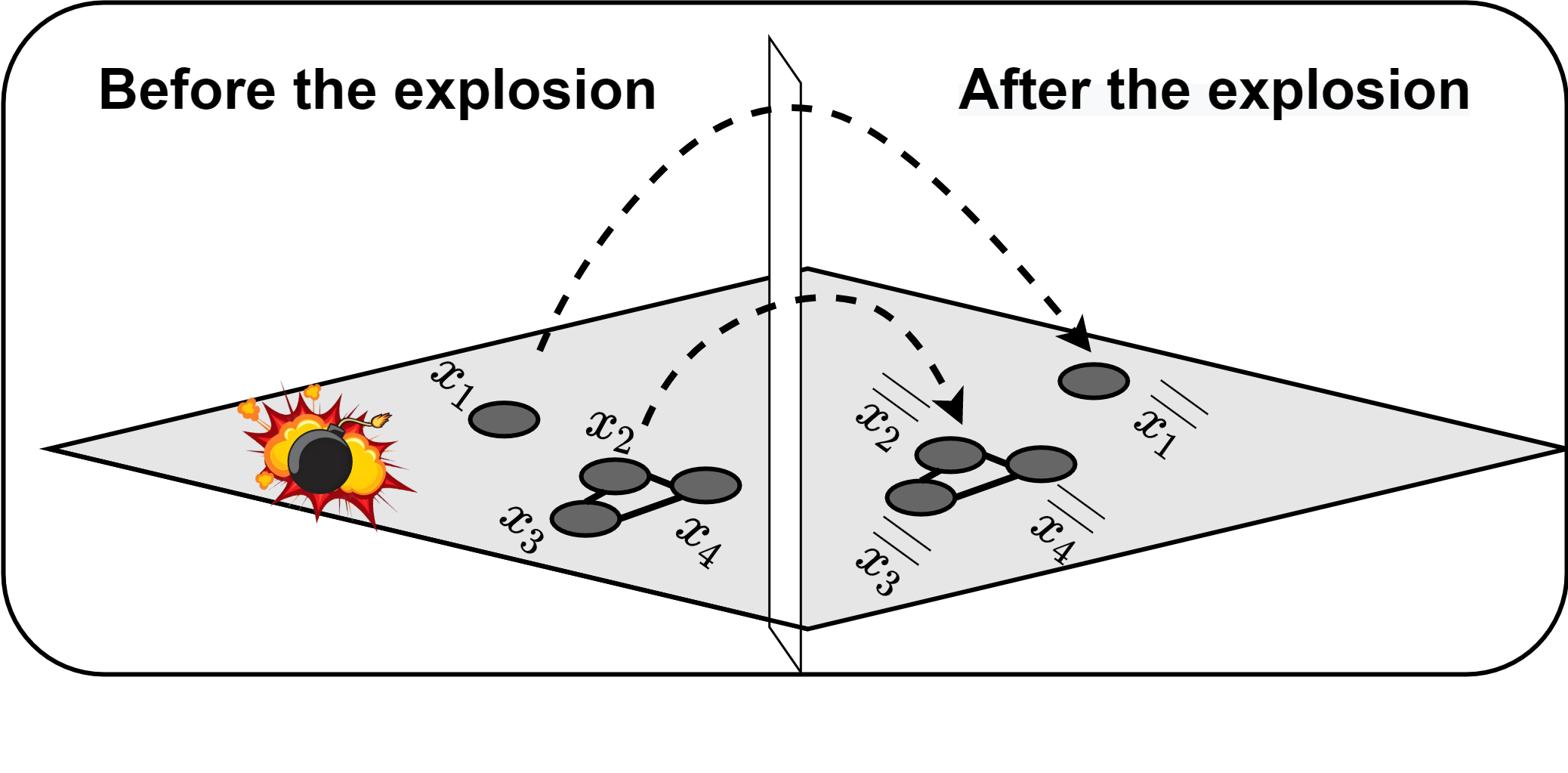}
  \caption{An example of invalid neighbors.}
  \label{fig:Invalid_neighbors}
\end{figure}

\begin{example}
\label{exa:Invalid neighbors}
\textbf{(Invalid Neighbors)} As shown in Figure \ref{fig:Invalid_neighbors}, $x_1$ forms an object-block by itself, and $x_2,x_3,x_4$ form another object-block. Let $k=2$. Before the explosion (i.e., in $X$), compared with $x_4$, $x_2$ and $x_3$ are closer to $x_1$, so $x_4$ does not belong to the $k$NN neighbors of $x_1$. After the explosion (i.e., in $\overline{\overline{X}}$), compared with $\overline{\overline{x_3}}$, $\overline{\overline{x_2}}$ and $\overline{\overline{x_4}}$ are closer to $\overline{\overline{x_1}}$, so $\overline{\overline{x_4}}$ belongs to the $k$NN neighbors of $\overline{\overline{x_1}}$. Therefore, according to Definition \ref{def:Invalid neighbors}, $\overline{\overline{x_4}}$ is an invalid neighbor of $\overline{\overline{x_1}}$. 
\end{example}

In order to reduce unnecessary calculations and movements, we only add repulsive forces between the invalid neighbors among the object-blocks. That is, for $\forall\overline{\overline{x_g}}\in\overline{\overline{\mathcal{B}_i}}$ and $\forall\overline{\overline{x_p}}\in\overline{\overline{\mathcal{B}_j}}$ ($i\neq j$), if $\overline{\overline{x_p}}$ is an invalid neighbor of $\overline{\overline{x_g}}$, then there exists a repulsive force between $\overline{\overline{x_p}}$ and $\overline{\overline{x_g}}$, denoted as $\mathbb{F}_{gp}$,
\begin{equation}
\mathbb{F}_{gp}=\frac{\overline{\overline{x_p}}-\overline{\overline{x_g}}}{\|\overline{\overline{x_p}}-\overline{\overline{x_g}}\|_2}\cdot\frac{1}{\|\overline{\overline{x_p}}-\overline{\overline{x_g}}\|_2},
\label{eq:repulsive force}
\end{equation}
in which $\frac{\overline{\overline{x_p}}-\overline{\overline{x_g}}}{\|\overline{\overline{x_p}}-\overline{\overline{x_g}}\|_2}$ is used to control the direction of the repulsive force, and $\frac{1}{\|\overline{\overline{x_p}}-\overline{\overline{x_g}}\|_2}$ is used to control the scale of the repulsive force. The farther the distance between $\overline{\overline{x_g}}$ and $\overline{\overline{x_p}}$ is, the smaller the repulsive force is. 

Eventually, the resultant force of the repulsive forces exerted on ${\overline{\overline{\mathcal{B}_i}}}^\dag$ is $\mathbb{F}_i$,
\begin{equation}
\mathbb{F}_i=\sum_{\overline{\overline{x_g}}\in\overline{\overline{\mathcal{B}_i}}}\left(\sum_{\overline{\overline{x_p}}\in\varphi_g}\mathbb{F}_{gp}\right),
\label{eq:repulsive force-heli}
\end{equation}
in which $\varphi_g$ is the set of invalid neighbors of $\overline{\overline{x_g}}$. The movement of ${\overline{\overline{\mathcal{B}_i}}}^\dag$ under repulsive forces follows the same principle as in the explosion process, here we do not elaborate further. For $\forall\overline{\overline{x_g}}\in\overline{\overline{\mathcal{B}_i}}$, after the repulsion process, $\overline{\overline{x_g}}$ will transform into $\widehat{x_g}$, 
\begin{equation}
\widehat{x_g}=\overline{\overline{x_g}}+\frac{{\mathbb{F}_i}^2}{{\overline{\overline{\mathcal{M}_i}}}^2}.
\label{eq:object after Repulsion Process}
\end{equation}
We denote $\overline{\overline{X}}$ after the repulsion process as $\widehat{X}$. We describe the detailed implementation of the repulsion process in Algorithm \ref{alg:The repulsion process}.

Finally, outlier detection algorithms can detect outliers from $\widehat{X}$ instead of $X$ to achieve higher accuracy.

\begin{algorithm}
\caption{The Repulsion Process}
\label{alg:The repulsion process}
\KwIn{$\overline{\overline{X}}$, $\left\{\overline{\overline{\mathcal{B}_1}},\overline{\overline{\mathcal{B}_2}},\cdots,\overline{\overline{\mathcal{B}_C}}\right\}$, $\left\{{\overline{\overline{\mathcal{B}_1}}}^\dag,{\overline{\overline{\mathcal{B}_2}}}^\dag,\cdots,{\overline{\overline{\mathcal{B}_C}}}^\dag\right\}$,  $\left\{\overline{\overline{\mathcal{M}_1}},\overline{\overline{\mathcal{M}_2}},\cdots,\overline{\overline{\mathcal{M}_C}}\right\}$}
\KwOut{$\widehat{X}$}
\For{$\overline{\overline{\mathcal{B}_i}},\overline{\overline{\mathcal{B}_j}}$ in $\left\{\overline{\overline{\mathcal{B}_1}},\overline{\overline{\mathcal{B}_2}},\cdots,\overline{\overline{\mathcal{B}_C}}\right\}$}{
Adding repulsive forces according to the formula (\ref{eq:repulsive force}).
}
\For{${\overline{\overline{\mathcal{B}_i}}}^\dag$ in $\left\{{\overline{\overline{\mathcal{B}_1}}}^\dag,{\overline{\overline{\mathcal{B}_2}}}^\dag,\cdots,{\overline{\overline{\mathcal{B}_C}}}^\dag\right\}$}{
Calculating the resultant force according to the formula (\ref{eq:repulsive force-heli}).\\
\For{$\overline{\overline{x_g}}$ in $\overline{\overline{\mathcal{B}_i}}$}{
Calculating $\widehat{x_g}$ according to the formula (\ref{eq:object after Repulsion Process}).
}
}
\Return{$\widehat{X}$}
\end{algorithm}

\begin{table*}
  \caption{The detailed information of datasets.}
  \label{tab:datasets}
  \centering
  \begin{tabular}{cccc|cccc}
    \hline
    \multicolumn{1}{c}{ }&\multicolumn{1}{c}{\emph{Number}}& \multicolumn{1}{c}{\emph{Dimension}}&\multicolumn{1}{c|}{\emph{Proportion}}&\multicolumn{1}{c}{ }&\multicolumn{1}{c}{\emph{Number}}& \multicolumn{1}{c}{\emph{Dimension}}&\multicolumn{1}{c}{\emph{Proportion}}\\
    \hline
    \emph{ALOI} & 49,999 & 27 & 3\% & \emph{Mammography} & 11,183 & 6 & 2.3\%\\
    \emph{Annthyroid} & 7,200 & 6 & 7.42\% & \emph{Cardio} & 1,831 & 21 & 9.6\%\\
    \emph{Speech} & 3,686 & 400 & 1.65\% & \emph{Glass} & 214 & 9 & 4.2\%\\
    \emph{Cardiotocography} & 2,068	& 21 & 20\% & \emph{Ionosphere} & 351 & 33 & 36\%\\
    \emph{Ionosphere\_norm} & 351 & 32 & 35.9\% &	\emph{Letter} & 1,600 & 32 & 6.25\%\\
    \emph{Stamps} & 340 & 9 & 9.12\% & \emph{Lympho} & 148 & 18 & 4.1\%\\
    \emph{WDBC} & 367 & 30 & 2.72\% & \emph{Pima} & 768 & 8 & 35\%\\
    \emph{Waveform} & 3,443 & 21 & 2.9\% & \emph{Thyroid} & 3,772 & 6 & 2.5\%\\
    \emph{HeartDisease} & 187 & 13 & 19.79\% &  \emph{Vowels} & 1,456 & 12 & 3.4\%\\
    \emph{Arrhythmia\_20} & 305 & 259 & 20\% & \emph{Wbc} & 378 & 30 & 5.6\%\\
    \emph{Arrhythmia} & 452 & 274 & 15\% & \emph{Wine} & 129 & 13 & 7.7\%\\
    \emph{Breastw} & 683 & 9 & 35\% & \emph{PageBlocks} & 5,171 & 10 & 4.98\%\\
    \hline
  \end{tabular}
\end{table*}

\subsection{Time Complexity Analysis}
\label{sec:Time Complexity Analysis}
1) The Object-block Division: OSD utilizes KDtree to search for $k$NN neighbors in order to construct $k$-nearest neighbor graph and compute weights, with a time complexity of $O(Nlog(N)+N)$; OSD traverses the dataset to generate the probability distribution curve, and then clip small-weight edges and determines the connected subgraphs (\emph{i.e.}, object-blocks), with a time complexity of $O(3N+E)$, where $E$ is the number of edges. 2) The Explosion Process: OSD computes particles and the virtual bomb, with a time complexity of $O(N)$; OSD traverses the set of particles to compute the explosion shock force and displacement, and traverses each object-block to update the dataset, with a time complexity of $O(C+N)$, where $C$ is the number of object-blocks. 3) The Repulsion Process: OSD utilizes KDtree to search for $k$NN neighbors after explosion, determines the invalid neighbors by comparing with the original $k$NN neighbors, and adds repulsion, with time complexity of $O(Nlog(N))$; OSD computes the resultant force and traverses each object-block to update the dataset, with a time complexity of $O(N)$. In summary, the total time complexity of OSD is $O(2Nlog(N)+7N+E+C)$.

\section{Experiments}
\label{sec:Experiments}
\subsection{Experimental Setting} 
\subsubsection{Datasets} 
We select 24 real-world datasets \cite{Rayana2016}. Table \ref{tab:datasets} records the detailed information of these datasets, including the number of objects, the dimension of datasets, the proportion of outliers.

\subsubsection{Baseline Algorithms}
So far, no outlier detection optimization strategy as generic as OSD has been proposed. In order to test OSD, we compare the performance between the OSD-optimized outlier detection algorithms and their optimized-version algorithms (see Definition \ref{def:optimized-version}). If the performance of the OSD-optimized outlier detection algorithms is superior to that of their optimized-version algorithms, then it indicates that we will not need to spend a huge amount of time and effort on designing new optimized-version algorithms for the existing outlier detection algorithms, instead, we can simply invoke the OSD plugin. In this paper, the selected outlier detection algorithms are two classic baseline algorithms, namely LOF \cite{breunig2000lof} and IForest (\emph{i.e.}, Isolation Forest \cite{liu2012isolation}). The selected optimized-version algorithms of LOF are KNNLOF \cite{xu2022outlier}, CBLOF \cite{he2003discovering}, and COF \cite{tang2002enhancing}; The selected optimized-version algorithms of IForest are EIF \cite{hariri2021extended}, DIF \cite{xu2023deep}, and INNE \cite{bandaragoda2018isolation}.

In addition, we compare the performance of mainstream outlier detection algorithms before and after being optimized by OSD, so as to verify the adaptability of OSD to various outlier detection principles. These mainstream algorithms are deep learning-based LUNAR \cite{goodge2022lunar} and RCA \cite{liu2021rca}, density-based OTF \cite{huang2023novel} and HDIOD \cite{zhou2024outlier}, and statistical-based ECOD \cite{li2023ecod} and BLDOD \cite{aydin2023boundary}. We also compare OSD with its variants, namely OSD-Random (\emph{i.e.}, OSD without virtual bomb positioning), OSD-NoRe (\emph{i.e.}, OSD without the repulsion process), and OSD-NOdiv (\emph{i.e.}, OSD without dividing object-blocks), so as to validate the necessity of the components within OSD.

\subsubsection{Evaluation Metrics} 
In this paper, we select two common evaluation metrics for outlier detection, namely \textbf{AUC} and \textbf{AP} \cite{campos2016evaluation, ok2024exploring}. The ranges of these metrics are all from 0 to 1. The closer the value is to 1, the more accurate the outlier detection is.

\subsection{Comparison Experiments} 
We invoke the OSD plugin to optimize 14 outlier detection algorithms on 24 datasets. Tables \ref{tab:accuracy(AUC)} and \ref{tab:accuracy(AP)} record the accuracy of these outlier detection algorithms before and after optimization. Based on Tables \ref{tab:accuracy(AUC)} and \ref{tab:accuracy(AP)}, we can draw two conclusions:

\textbf{1) The OSD-optimized outlier detection algorithms can replace their optimized-version algorithms.} We calculate the average accuracy of the OSD-optimized IForest (hereafter referred to as IForest+OSD) and IForest’s optimized-version algorithms (\emph{i.e.}, EIF, DIF, INNE) on 24 datasets. Specifically, the average accuracy of EIF is 0.812 (AUC) and 0.302 (AP); The average accuracy of DIF is 0.74 (AUC) and 0.217 (AP); The average accuracy of INNE is 0.805 (AUC) and 0.281 (AP); The average accuracy of IForest+OSD is 0.898 (AUC) and 0.465 (AP). In addition, we also calculate the average accuracy of the OSD-optimized LOF (hereafter referred to as LOF+OSD) and LOF’s optimized-version algorithms (\emph{i.e.}, COF, CBLOF, KNNLOF). Specifically, the average accuracy of COF is 0.703 (AUC) and 0.208 (AP); The average accuracy of CBLOF is 0.783 (AUC) and 0.263 (AP); The average accuracy of KNNLOF is 0.475 (AUC) and 0.128 (AP); The average accuracy of LOF+OSD is 0.863 (AUC) and 0.399 (AP). Obviously, regardless of the evaluation metric, the average accuracies of IForest+OSD and LOF+OSD are higher than those of optimized-version algorithms. In other words, the OSD-optimized outlier detection algorithms can replace their optimized-version algorithms. \emph{Therefore, in the future, we will not need to spend a huge amount of time and effort on designing new optimized-version algorithms for the existing outlier detection algorithms. Instead, we can simply directly invoke the OSD plugin to optimize them.}

\begin{sidewaystable*}
    \centering
    \caption{The accuracy (AUC) of outlier detection algorithms before and after OSD optimization.}
    \label{tab:accuracy(AUC)}
    \setlength{\tabcolsep}{2pt}
\scalebox{0.8}{
\begin{tabular}{ccc|cc|cc|cc|cc|cc|cc|cc|cc|cc|cc|cc|cc|cc}
\hline
~ & \multicolumn{2}{c|}{LOF} & \multicolumn{2}{c|}{COF} & \multicolumn{2}{c|}{CBLOF} & \multicolumn{2}{c|}{KNNLOF} & \multicolumn{2}{c|}{IForest} & \multicolumn{2}{c|}{EIF} & \multicolumn{2}{c|}{INNE} & \multicolumn{2}{c|}{DIF} & \multicolumn{2}{c|}{BLDOD} & \multicolumn{2}{c|}{ECOD} & \multicolumn{2}{c|}{HDIOD} & \multicolumn{2}{c|}{LUNAR} & \multicolumn{2}{c|}{OTF} & \multicolumn{2}{c}{RCA}\\
~ &  & +OSD &  & +OSD &  & +OSD &  & +OSD &  & +OSD &  & +OSD &  & +OSD &  & +OSD &  & +OSD &  & +OSD &  & +OSD &  & +OSD &  & +OSD &  & +OSD\\
\hline
ALOI & 0.783 & \textbf{0.785} & nan & nan & 0.543 & \textbf{0.658} & 0.54 & \textbf{0.581} & 0.55 & \textbf{0.664} & 0.553 & \textbf{0.691} & 0.561 & \textbf{0.653} & 0.55 & \textbf{0.638} & 0.5 & \textbf{0.548} & 0.529 & \textbf{0.645} & 0.753 & \textbf{0.764} & 0.753 & \underline{0.734} & 0.458 & \textbf{0.735} & 0.551 & \textbf{0.639}\\
Arrhythima\_20 & 0.741 & \textbf{0.746} & 0.737 & \textbf{0.765} & 0.711 & \textbf{0.736} & 0.297 & \textbf{0.604} & 0.745 & \textbf{0.788} & 0.715 & \textbf{0.744} & 0.725 & \textbf{0.75} & 0.715 & \textbf{0.75} & 0.5 & \textbf{0.711} & 0.714 & \textbf{0.73} & 0.727 & \textbf{0.74} & 0.739 & \textbf{0.754} & 0.412 & \textbf{0.66} & 0.754 & \textbf{0.794}\\
Cardiotocography & 0.583 & \textbf{0.715} & 0.542 & \textbf{0.618} & 0.681 & \textbf{0.898} & 0.52 & \textbf{0.606} & 0.763 & \textbf{0.875} & 0.76 & \textbf{0.831} & 0.798 & \textbf{0.848} & 0.632 & \textbf{0.875} & 0.5 & \textbf{0.803} & 0.795 & \textbf{0.799} & 0.61 & \textbf{0.629} & 0.555 & \textbf{0.799} & 0.448 & \textbf{0.53} & 0.676 & \textbf{0.88}\\
HeartDisease & 0.557 & \textbf{0.823} & 0.548 & \textbf{0.734} & 0.82 & \textbf{0.825} & 0.494 & \textbf{0.625} & 0.748 & \textbf{0.832} & 0.751 & \textbf{0.828} & 0.675 & \textbf{0.833} & 0.543 & \textbf{0.758} & 0.725 & \textbf{0.762} & 0.731 & \textbf{0.818} & 0.71 & \textbf{0.835} & 0.686 & \textbf{0.81} & 0.444 & \textbf{0.56} & 0.705 & \textbf{0.757}\\
Ionosphere\_norm & 0.905 & \textbf{0.928} & 0.911 & \textbf{0.94} & 0.799 & \textbf{0.915} & 0.385 & \textbf{0.767} & 0.852 & \textbf{0.95} & 0.909 & \textbf{0.946} & 0.902 & \textbf{0.941} & 0.898 & \textbf{0.935} & 0.874 & \textbf{0.928} & 0.728 & \textbf{0.887} & 0.927 & \textbf{0.928} & 0.926 & \textbf{0.934} & 0.506 & \textbf{0.58} & 0.813 & \textbf{0.915}\\
PageBlocks & 0.804 & \textbf{0.839} & 0.717 & \textbf{0.812} & 0.877 & \textbf{0.933} & 0.535 & \textbf{0.632} & 0.912 & \textbf{0.925} & 0.915 & \textbf{0.925} & 0.937 & \textbf{0.939} & 0.909 & \textbf{0.945} & 0.5 & \textbf{0.748} & 0.922 & \textbf{0.936} & 0.84 & \underline{0.839} & 0.83 & \textbf{0.928} & 0.457 & \textbf{0.646} & 0.911 & \textbf{0.912}\\
Stamps & 0.705 & \textbf{0.964} & 0.528 & \textbf{0.865} & 0.914 & \textbf{0.916} & 0.461 & \textbf{0.685} & 0.913 & \textbf{0.953} & 0.88 & \textbf{0.942} & 0.83 & \textbf{0.945} & 0.846 & \textbf{0.971} & 0.515 & \textbf{0.851} & 0.876 & \textbf{0.952} & 0.883 & \textbf{0.895} & 0.866 & \textbf{0.96} & 0.424 & \textbf{0.595} & 0.793 & \textbf{0.956}\\
WDBC & 0.904 & \textbf{0.986} & 0.838 & \textbf{0.991} & 0.945 & \textbf{0.984} & 0.422 & \textbf{0.897} & 0.948 & \textbf{0.976} & 0.943 & \textbf{0.969} & 0.927 & \textbf{0.962} & 0.739 & \textbf{0.978} & 0.935 & \textbf{0.953} & 0.917 & \textbf{0.959} & 0.92 & \textbf{0.926} & 0.928 & \textbf{0.97} & 0.394 & \textbf{0.603} & 0.707 & \textbf{0.91}\\
Waveform & 0.73 & \textbf{0.744} & 0.659 & \textbf{0.705} & 0.592 & \textbf{0.898} & 0.424 & \textbf{0.524} & 0.737 & \textbf{0.817} & 0.768 & \textbf{0.811} & 0.754 & \textbf{0.811} & 0.725 & \textbf{0.791} & 0.5 & \textbf{0.853} & 0.608 & \textbf{0.739} & 0.745 & \textbf{0.746} & 0.748 & \textbf{0.762} & 0.473 & \textbf{0.533} & 0.632 & \textbf{0.842}\\
Annthyroid & 0.707 & \textbf{0.752} & 0.657 & \textbf{0.711} & 0.529 & \textbf{0.783} & 0.453 & \textbf{0.55} & 0.881 & \textbf{0.952} & 0.66 & \textbf{0.79} & 0.685 & \textbf{0.706} & 0.675 & \textbf{0.861} & 0.5 & \textbf{0.592} & 0.789 & \textbf{0.83} & 0.741 & \textbf{0.749} & 0.742 & \textbf{0.843} & 0.489 & \textbf{0.538} & 0.709 & \underline{0.703}\\
Arrhythmia & 0.756 & \textbf{0.762} & 0.722 & \textbf{0.747} & 0.801 & 0.801 & 0.433 & \textbf{0.685} & 0.815 & \underline{0.809} & 0.804 & 0.804 & 0.775 & \textbf{0.78} & 0.809 & \textbf{0.815} & 0.573 & \textbf{0.79} & 0.805 & \underline{0.782} & 0.795 & \underline{0.792} & 0.806 & \textbf{0.81} & 0.535 & \textbf{0.597} & 0.746 & \textbf{0.772}\\
Breastw & 0.55 & \textbf{0.914} & 0.658 & \textbf{0.77} & 0.995 & 0.995 & 0.597 & \textbf{0.667} & 0.989 & \textbf{0.993} & 0.987 & \textbf{0.992} & 0.742 & \textbf{0.983} & 0.765 & \textbf{0.98} & 0.364 & \textbf{0.945} & 0.991 & \underline{0.982} & 0.933 & \textbf{0.98} & 0.978 & \textbf{0.983} & 0.489 & \textbf{0.563} & 0.988 & \textbf{0.993}\\
Cardio & 0.632 & \textbf{0.78} & 0.572 & \textbf{0.706} & 0.865 & \textbf{0.962} & 0.578 & \textbf{0.68} & 0.948 & \textbf{0.962} & 0.944 & \textbf{0.954} & 0.939 & \textbf{0.947} & 0.939 & \textbf{0.956} & 0.5 & \textbf{0.862} & 0.935 & \textbf{0.937} & 0.736 & \textbf{0.791} & 0.735 & \textbf{0.906} & 0.52 & \textbf{0.65} & 0.937 & \textbf{0.967}\\
Glass & 0.809 & \textbf{0.886} & 0.751 & \textbf{0.862} & 0.712 & \textbf{0.882} & 0.518 & \textbf{0.921} & 0.726 & \textbf{0.889} & 0.707 & \textbf{0.885} & 0.818 & \textbf{0.934} & 0.79 & \textbf{0.888} & 0.531 & \textbf{0.787} & 0.621 & \textbf{0.872} & 0.853 & \textbf{0.879} & 0.858 & \textbf{0.912} & 0.496 & \textbf{0.726} & 0.714 & \textbf{0.877}\\
Ionosphere & 0.903 & \textbf{0.933} & 0.913 & \textbf{0.939} & 0.809 & \textbf{0.925} & 0.465 & \textbf{0.793} & 0.866 & \textbf{0.949} & 0.913 & \textbf{0.943} & 0.917 & \textbf{0.943} & 0.899 & \textbf{0.929} & 0.875 & \textbf{0.931} & 0.735 & \textbf{0.901} & 0.944 & \textbf{0.947} & 0.938 & \textbf{0.94} & 0.506 & \textbf{0.572} & 0.824 & \textbf{0.897}\\
Letter & 0.906 & \textbf{0.929} & 0.876 & \textbf{0.911} & 0.626 & \textbf{0.902} & 0.416 & \textbf{0.768} & 0.709 & \textbf{0.908} & 0.658 & \textbf{0.904} & 0.712 & \textbf{0.906} & 0.673 & \textbf{0.903} & 0.415 & \textbf{0.882} & 0.572 & \textbf{0.902} & 0.914 & \textbf{0.934} & 0.903 & \textbf{0.946} & 0.46 & \textbf{0.603} & 0.782 & \textbf{0.902}\\
Lympho & 0.98 & \textbf{0.982} & 0.925 & \textbf{0.969} & 0.993 & \textbf{1} & 0.424 & \textbf{0.827} & 1 & 1 & 1 & 1 & 0.993 & \textbf{0.999} & 0.884 & \textbf{1} & 0.992 & \textbf{0.996} & 0.996 & \textbf{0.998} & 0.987 & 0.987 & 0.985 & \textbf{1} & 0.196 & \textbf{0.63} & 0.969 & \textbf{1}\\
Mammography & 0.735 & \textbf{0.792} & 0.717 & \textbf{0.786} & 0.868 & \textbf{0.927} & 0.632 & \textbf{0.694} & 0.889 & \textbf{0.894} & 0.842 & \textbf{0.877} & 0.778 & \textbf{0.865} & 0.756 & \textbf{0.777} & 0.5 & \textbf{0.622} & 0.906 & \textbf{0.911} & 0.838 & 0.838 & 0.848 & \textbf{0.88} & 0.564 & \textbf{0.614} & 0.844 & \underline{0.74}\\
Pima & 0.593 & \textbf{0.672} & 0.576 & \textbf{0.643} & 0.713 & \textbf{0.761} & 0.492 & \textbf{0.577} & 0.69 & \textbf{0.752} & 0.712 & \textbf{0.74} & 0.707 & \textbf{0.723} & 0.628 & \textbf{0.74} & 0.5 & \textbf{0.657} & 0.594 & \textbf{0.696} & 0.702 & \textbf{0.714} & 0.718 & \textbf{0.739} & 0.501 & \textbf{0.564} & 0.726 & \textbf{0.737}\\
Speech & 0.777 & \textbf{0.891} & 0.743 & \textbf{0.776} & 0.461 & \textbf{0.753} & 0.383 & \textbf{0.755} & 0.548 & \textbf{0.735} & 0.5 & \textbf{0.735} & 0.486 & \textbf{0.694} & 0.515 & \textbf{0.733} & 0.501 & \textbf{0.725} & 0.47 & \textbf{0.7} & 0.556 & \textbf{0.66} & 0.564 & \textbf{0.735} & 0.654 & \textbf{0.802} & 0.416 & \textbf{0.734}\\
Thyroid & 0.782 & \textbf{0.949} & 0.597 & \textbf{0.858} & 0.894 & \textbf{0.978} & 0.452 & \textbf{0.692} & 0.984 & \textbf{0.985} & 0.936 & \textbf{0.962} & 0.954 & \textbf{0.962} & 0.962 & \textbf{0.984} & 0.5 & \textbf{0.868} & 0.977 & 0.977 & 0.951 & \textbf{0.955} & 0.952 & \textbf{0.973} & 0.422 & \textbf{0.572} & 0.95 & \underline{0.947}\\
Vowels & 0.951 & \textbf{0.995} & 0.87 & \textbf{0.995} & 0.75 & \textbf{0.992} & 0.519 & \textbf{0.94} & 0.789 & \textbf{0.989} & 0.837 & \textbf{0.987} & 0.913 & \textbf{0.99} & 0.789 & \textbf{0.995} & 0.5 & \textbf{0.978} & 0.593 & \textbf{0.977} & 0.985 & \textbf{0.995} & 0.918 & \textbf{0.995} & 0.55 & \textbf{0.775} & 0.891 & \textbf{0.976}\\
Wbc & 0.947 & \underline{0.946} & 0.741 & \textbf{0.907} & 0.951 & \textbf{0.963} & 0.355 & \textbf{0.728} & 0.958 & \textbf{0.966} & 0.95 & \textbf{0.957} & 0.942 & \textbf{0.97} & 0.727 & \textbf{0.953} & 0.94 & \textbf{0.947} & 0.9 & \textbf{0.916} & 0.949 & \underline{0.948} & 0.949 & \textbf{0.966} & 0.54 & \textbf{0.642} & 0.908 & \textbf{0.944}\\
Wine & 0.857 & \textbf{0.992} & 0.371 & \textbf{0.954} & 0.933 & \textbf{0.984} & 0.602 & \textbf{0.692} & 0.834 & \textbf{0.981} & 0.841 & \textbf{0.987} & 0.842 & \textbf{0.987} & 0.388 & \textbf{0.976} & 0.888 & \textbf{0.95} & 0.733 & \textbf{0.987} & 0.859 & \textbf{0.987} & 0.72 & \textbf{0.942} & 0.501 & \textbf{0.607} & 0.94 & \textbf{0.974}\\
\hline

\end{tabular}
}
\end{sidewaystable*}

\begin{sidewaystable*}
    \centering
    \caption{The accuracy (AP) of outlier detection algorithms before and after OSD optimization.}
    \label{tab:accuracy(AP)}
    \setlength{\tabcolsep}{2pt}
\scalebox{0.8}{
\begin{tabular}{ccc|cc|cc|cc|cc|cc|cc|cc|cc|cc|cc|cc|cc|cc}
\hline
~ & \multicolumn{2}{c|}{LOF} & \multicolumn{2}{c|}{COF} & \multicolumn{2}{c|}{CBLOF} & \multicolumn{2}{c|}{KNNLOF} & \multicolumn{2}{c|}{IForest} & \multicolumn{2}{c|}{EIF} & \multicolumn{2}{c|}{INNE} & \multicolumn{2}{c|}{DIF} & \multicolumn{2}{c|}{BLDOD} & \multicolumn{2}{c|}{ECOD} & \multicolumn{2}{c|}{HDIOD} & \multicolumn{2}{c|}{LUNAR} & \multicolumn{2}{c|}{OTF} & \multicolumn{2}{c}{RCA}\\
~ &  & +OSD &  & +OSD &  & +OSD &  & +OSD &  & +OSD &  & +OSD &  & +OSD &  & +OSD &  & +OSD &  & +OSD &  & +OSD &  & +OSD &  & +OSD &  & +OSD\\
\hline
ALOI & 0.131 & \textbf{0.142} & nan & nan & 0.043 & \textbf{0.087} & 0.04 & \textbf{0.07} & 0.034 & \textbf{0.095} & 0.032 & \textbf{0.067} & 0.042 & \textbf{0.087} & 0.046 & \textbf{0.085} & 0.03 & \textbf{0.067} & 0.032 & \textbf{0.086} & 0.126 & \textbf{0.131} & 0.117 & \underline{0.114} & 0.028 & \textbf{0.051} & 0.033 & \textbf{0.075}\\
Arrhythmia\_20
 & 0.331 & \textbf{0.357} & 0.307 & \textbf{0.357} & 0.286 & \textbf{0.37} & 0.191 & \textbf{0.267} & 0.331 & \textbf{0.43} & 0.307 & \textbf{0.384} & 0.357 & \textbf{0.384} & 0.319 & \textbf{0.37} & 0.2 & \textbf{0.344} & 0.319 & \textbf{0.37} & 0.338 & \textbf{0.362} & 0.331 & \textbf{0.37} & 0.309 & \textbf{0.36} & 0.28 & \textbf{0.422}\\
Cardiotocography
 & 0.237 & \textbf{0.307} & 0.215 & \textbf{0.268} & 0.285 & \textbf{0.543} & 0.21 & \textbf{0.262} & 0.312 & \textbf{0.452} & 0.337 & \textbf{0.486} & 0.356 & \textbf{0.414} & 0.276 & \textbf{0.472} & 0.2 & \textbf{0.396} & 0.341 & \textbf{0.35} & 0.26 & \textbf{0.298} & 0.243 & \textbf{0.362} & 0.224 & \textbf{0.292} & 0.289 & \textbf{0.442}\\
HeartDisease
 & 0.202 & \textbf{0.36} & 0.196 & \textbf{0.299} & 0.383 & 0.383 & 0.192 & \textbf{0.266} & 0.282 & \textbf{0.408} & 0.282 & \textbf{0.383} & 0.217 & \textbf{0.36} & 0.189 & \textbf{0.318} & 0.299 & \textbf{0.338} & 0.239 & \textbf{0.36} & 0.219 & \textbf{0.369} & 0.227 & \textbf{0.36} & 0.242 & \textbf{0.387} & 0.328 & \textbf{0.42}\\
Ionosphere\_norm
 & 0.744 & \textbf{0.775} & 0.765 & \textbf{0.797} & 0.474 & \textbf{0.765} & 0.342 & \textbf{0.604} & 0.588 & \textbf{0.87} & 0.657 & \textbf{0.776} & 0.699 & \textbf{0.799} & 0.675 & \textbf{0.831} & 0.588 & \textbf{0.765} & 0.451 & \textbf{0.765} & 0.805 & \textbf{0.833} & 0.775 & \textbf{0.819} & 0.517 & \textbf{0.792} & 0.512 & \textbf{0.775}\\
PageBlocks
 & 0.21 & \textbf{0.279} & 0.149 & \textbf{0.201} & 0.118 & \textbf{0.359} & 0.077 & \textbf{0.155} & 0.175 & \textbf{0.305} & 0.264 & \textbf{0.309} & 0.363 & \textbf{0.383} & 0.21 & \textbf{0.685} & 0.05 & \textbf{0.237} & 0.181 & \textbf{0.338} & 0.145 & 0.145 & 0.223 & \textbf{0.309} & 0.109 & \textbf{0.251} & 0.196 & \textbf{0.245}\\
Stamps & 0.111 & \textbf{0.488} & 0.111 & \textbf{0.311} & 0.229 & \textbf{0.449} & 0.092 & \textbf{0.254} & 0.149 & \textbf{0.53} & 0.122 & \textbf{0.53} & 0.122 & \textbf{0.352} & 0.102 & \textbf{0.574} & 0.089 & \textbf{0.254} & 0.149 & \textbf{0.488} & 0.144 & \textbf{0.197} & 0.166 & \textbf{0.411} & 0.101 & \textbf{0.455} & 0.161 & \textbf{0.579}\\
WDBC & 0.371 & \textbf{0.645} & 0.035 & \textbf{0.645} & 0.371 & \textbf{0.645} & 0.036 & \textbf{0.411} & 0.645 & 0.645 & 0.498 & \textbf{0.645} & 0.498 & \textbf{0.717} & 0.035 & \textbf{0.645} & 0.066 & \textbf{0.717} & 0.264 & \textbf{0.645} & 0.151 & 0.151 & 0.371 & \textbf{0.645} & 0.211 & \textbf{0.553} & 0.132 & 0.132\\
Waveform & 0.029 & \textbf{0.048} & 0.029 & \textbf{0.048} & 0.029 & \textbf{0.048} & 0.029 & \textbf{0.039} & 0.029 & \textbf{0.048} & 0.044 & \textbf{0.078} & 0.034 & \underline{0.029} & 0.034 & \textbf{0.048} & 0.029 & 0.029 & 0.029 & \textbf{0.048} & 0.064 & \textbf{0.065} & 0.034 & \textbf{0.048} & 0.031 & \textbf{0.069} & 0.033 & \textbf{0.063}\\
Annthyroid & 0.104 & \textbf{0.149} & 0.091 & \textbf{0.117} & 0.076 & \textbf{0.183} & 0.073 & \textbf{0.085} & 0.192 & \textbf{0.276} & 0.111 & \textbf{0.145} & 0.129 & \textbf{0.163} & 0.128 & \textbf{0.14} & 0.074 & \textbf{0.112} & 0.146 & \textbf{0.159} & 0.123 & \textbf{0.125} & 0.127 & \textbf{0.178} & 0.099 & \textbf{0.128} & 0.123 & \textbf{0.145}\\
Arrhythmia & 0.249 & \textbf{0.269} & 0.259 & \textbf{0.316} & 0.303 & \textbf{0.328} & 0.144 & \textbf{0.272} & 0.342 & \underline{0.316} & 0.303 & \textbf{0.342} & 0.259 & \textbf{0.269} & 0.292 & \textbf{0.342} & 0.147 & \textbf{0.232} & 0.316 & \underline{0.259} & 0.28 & \textbf{0.319} & 0.292 & \textbf{0.316} & 0.254 & \textbf{0.29} & 0.208 & \textbf{0.293}\\
Breastw & 0.375 & \textbf{0.746} & 0.365 & \textbf{0.513} & 0.92 & \underline{0.913} & 0.342 & \textbf{0.435} & 0.894 & \textbf{0.92} & 0.888 & \textbf{0.913} & 0.421 & \textbf{0.913} & 0.462 & \textbf{0.885} & 0.35 & \textbf{0.888} & 0.888 & \underline{0.881} & 0.686 & \textbf{0.871} & 0.875 & \textbf{0.888} & 0.471 & \textbf{0.568} & 0.902 & 0.902\\
Cardio & 0.135 & \textbf{0.254} & 0.117 & \textbf{0.22} & 0.347 & \textbf{0.46} & 0.13 & \textbf{0.209} & 0.395 & \textbf{0.498} & 0.37 & \textbf{0.447} & 0.336 & \textbf{0.384} & 0.407 & \textbf{0.587} & 0.096 & \textbf{0.305} & 0.325 & \textbf{0.353} & 0.207 & \textbf{0.25} & 0.161 & \textbf{0.283} & 0.216 & \textbf{0.274} & 0.381 & \textbf{0.637}\\
Glass & 0.082 & \textbf{0.327} & 0.082 & \textbf{0.153} & 0.05 & \textbf{0.139} & 0.056 & \textbf{0.107} & 0.05 & \textbf{0.082} & 0.05 & \textbf{0.082} & 0.05 & \textbf{0.327} & 0.05 & \textbf{0.139} & 0.051 & 0.051 & 0.05 & 0.05 & 0.12 & \textbf{0.167} & 0.05 & \textbf{0.139} & 0.047 & \textbf{0.18} & 0.105 & \textbf{0.201}\\
Ionosphere & 0.765 & \textbf{0.808} & 0.765 & \textbf{0.83} & 0.48 & \textbf{0.797} & 0.367 & \textbf{0.657} & 0.596 & \textbf{0.845} & 0.666 & \textbf{0.801} & 0.744 & \textbf{0.819} & 0.648 & \textbf{0.804} & 0.588 & \textbf{0.797} & 0.456 & \textbf{0.797} & 0.81 & \textbf{0.853} & 0.797 & \textbf{0.83} & 0.476 & \textbf{0.811} & 0.512 & \textbf{0.775}\\
Letter & 0.31 & \textbf{0.396} & 0.263 & \textbf{0.396} & 0.068 & \textbf{0.331} & 0.062 & \textbf{0.254} & 0.076 & \textbf{0.366} & 0.069 & \textbf{0.341} & 0.114 & \textbf{0.42} & 0.065 & \textbf{0.341} & 0.062 & \textbf{0.331} & 0.065 & \textbf{0.32} & 0.294 & \textbf{0.338} & 0.229 & \textbf{0.385} & 0.191 & \textbf{0.366} & 0.088 & \textbf{0.358}\\
Lympho & 0.394 & 0.394 & 0.235 & \textbf{0.458} & 0.602 & \textbf{0.857} & 0.041 & \textbf{0.062} & 0.857 & 0.857 & 1 & 1 & 0.602 & \textbf{0.857} & 0.394 & \textbf{0.857} & 0.458 & \textbf{0.701} & 0.857 & 0.857 & 0.68 & 0.68 & 0.602 & \textbf{0.857} & 0.395 & \textbf{0.547} & 0.289 & \textbf{0.55}\\
Mammography & 0.027 & \textbf{0.081} & 0.027 & \textbf{0.066} & 0.024 & \textbf{0.047} & 0.023 & \textbf{0.057} & 0.087 & \textbf{0.093} & 0.028 & \textbf{0.075} & 0.06 & \textbf{0.079} & 0.051 & \textbf{0.055} & 0.023 & \textbf{0.052} & 0.099 & \underline{0.06} & 0.055 & \textbf{0.056} & 0.055 & \textbf{0.081} & 0.025 & \textbf{0.036} & 0.073 & \textbf{0.078}\\
Pima & 0.388 & \textbf{0.446} & 0.377 & \textbf{0.452} & 0.441 & \textbf{0.48} & 0.352 & \textbf{0.393} & 0.449 & \textbf{0.508} & 0.444 & \textbf{0.493} & 0.452 & \textbf{0.466} & 0.377 & \textbf{0.483} & 0.364 & \textbf{0.417} & 0.397 & \textbf{0.466} & 0.385 & \textbf{0.443} & 0.457 & \textbf{0.486} & 0.434 & \textbf{0.465} & 0.474 & \textbf{0.502}\\
Speech & 0.032 & \textbf{0.108} & 0.036 & \textbf{0.099} & 0.017 & \textbf{0.081} & 0.017 & \textbf{0.074} & 0.02 & \textbf{0.108} & 0.018 & \textbf{0.099} & 0.017 & \textbf{0.032} & 0.02 & \textbf{0.108} & 0.017 & \textbf{0.025} & 0.017 & \textbf{0.081} & 0.019 & \textbf{0.095} & 0.022 & \textbf{0.099} & 0.067 & \textbf{0.126} & 0.018 & \textbf{0.069}\\
Thyroid & 0.053 & \textbf{0.204} & 0.025 & \textbf{0.069} & 0.046 & \textbf{0.262} & 0.025 & \textbf{0.156} & 0.416 & \textbf{0.43} & 0.12 & \textbf{0.244} & 0.297 & \textbf{0.403} & 0.232 & \textbf{0.329} & 0.026 & \textbf{0.225} & 0.317 & \underline{0.294} & 0.137 & \textbf{0.139} & 0.106 & \textbf{0.283} & 0.209 & \textbf{0.273} & 0.178 & \textbf{0.373}\\
Vowel & 0.181 & \textbf{0.779} & 0.166 & \textbf{0.794} & 0.035 & \textbf{0.779} & 0.038 & \textbf{0.628} & 0.084 & \textbf{0.811} & 0.138 & \textbf{0.779} & 0.166 & \textbf{0.849} & 0.061 & \textbf{0.811} & 0.034 & \textbf{0.76} & 0.061 & \textbf{0.779} & 0.359 & \textbf{0.367} & 0.103 & \textbf{0.794} & 0.294 & \textbf{0.763} & 0.201 & \textbf{0.684}\\
Wbc & 0.335 & \textbf{0.387} & 0.096 & \textbf{0.288} & 0.387 & \textbf{0.503} & 0.081 & \textbf{0.256} & 0.443 & 0.443 & 0.404 & 0.404 & 0.335 & \textbf{0.636} & 0.055 & \textbf{0.443} & 0.121 & \textbf{0.35} & 0.207 & \textbf{0.288} & 0.248 & \textbf{0.257} & 0.335 & \textbf{0.443} & 0.253 & \textbf{0.32} & 0.208 & \textbf{0.292}\\
Wine & 0.08 & \textbf{0.818} & 0.078 & \textbf{0.513} & 0.289 & \textbf{0.818} & 0.106 & \textbf{0.154} & 0.102 & \textbf{0.818} & 0.102 & \textbf{0.818} & 0.078 & \textbf{0.908} & 0.078 & \textbf{0.818} & 0.154 & \textbf{0.908} & 0.08 & \textbf{0.818} & 0.129 & \textbf{0.413} & 0.207 & \textbf{0.513} & 0.078 & \textbf{0.586} & 0.258 & \textbf{0.819}\\
\hline
\end{tabular}
}
\end{sidewaystable*}

\textbf{2) OSD is applicable to various outlier detection principles.} In Tables \ref{tab:accuracy(AUC)} and \ref{tab:accuracy(AP)}, if the accuracy after optimization improves, the result is \textbf{bolded}; if the accuracy after optimization decreases, the result is \underline{underlined}. The results show that, for 14 outlier detection algorithms with different principles, OSD can improve the accuracies of all outlier detection algorithms on the majority of datasets. For example, the accuracy of BLDOD on \emph{Breastw} is 0.364 (AUC), but the accuracy of BLDOD+OSD is as high as 0.945 (AUC), with an improvement rate of 159.6\%; the accuracy of COF on \emph{WDBC} is only 0.035 (AP), but the accuracy of COF+OSD reaches 0.645 (AP), with an improvement rate of 1,742.8\%. \emph{Obviously, OSD is not picky about outlier detection principles and is applicable to diverse outlier detection algorithms.} In terms of average accuracy, OSD improves these algorithms by an average of 15\% (AUC) and 63.7\% (AP).

\begin{table*}
  \caption{The comparison between OSD and OSD-Random.}
  \label{tab:OSD-Random}
  \setlength{\tabcolsep}{0.7mm}
  \centering
  \begin{tabular}{cc|cccccccccccccc}
    \hline
    \multicolumn{1}{c}{ }&\multicolumn{1}{c|}{ }&\multicolumn{1}{c}{LOF}& \multicolumn{1}{c}{ECOD}&\multicolumn{1}{c}{COF}&\multicolumn{1}{c}{EIF}&\multicolumn{1}{c}{CBLOF}& \multicolumn{1}{c}{RCA}&\multicolumn{1}{c}{DIF}&\multicolumn{1}{c}{INNE}&\multicolumn{1}{c}{IForest}&\multicolumn{1}{c}{KNNLOF}&\multicolumn{1}{c}{BLDOD}&\multicolumn{1}{c}{HDIOD}&\multicolumn{1}{c}{OTF}&\multicolumn{1}{c}{LUNAR}\\
    \hline
   \multirow{2}{*}{\textbf{AUC}}& IOD  & \textbf{0.863} & \textbf{0.868} & \textbf{0.825} & \textbf{0.883} & \textbf{0.89} & \textbf{0.865} & \textbf{0.88} & \textbf{0.878} & \textbf{0.898} & \textbf{0.704} & \textbf{0.82} & \textbf{0.852} & \textbf{0.621} & \textbf{0.884} \\ 
 & IOD-Random & 0.791 & 0.789 & 0.711 & 0.807 & 0.765 & 0.758 & 0.668 & 0.799 & 0.807 & 0.489 & 0.545 & 0.8 & 0.454 & 0.805 \\ 
\multirow{2}{*}{\textbf{AP}} & IOD & \textbf{0.399} & \textbf{0.413} & \textbf{0.357} & \textbf{0.443} & \textbf{0.443} & \textbf{0.41} & \textbf{0.465} & \textbf{0.46} & \textbf{0.465} & \textbf{0.255} & \textbf{0.388} & 0.33 & \textbf{0.373} & \textbf{0.417} \\ 
 & IOD-Random & 0.387 & 0.381 & 0.306 & 0.414 & 0.376 & 0.282 & 0.282 & 0.424 & 0.425 & 0.167 & 0.23 & \textbf{0.426} & 0.118 & 0.372 \\ 
    \hline
  \end{tabular}
\end{table*}

\subsection{Ablation Experiment}
\textbf{Virtual Bomb Location.} As discussed in Section \ref{sec:The explosion process}, we have proven through Theorem \ref{the:bomb location} that placing the virtual bomb at the center of particles can exert the maximum explosion shock force on the dataset, thereby maximizing the effect of the explosion process. Here, we further validate this conclusion through experiments. OSD-Random is a version of OSD that randomly places the virtual bomb. Table \ref{tab:OSD-Random} records the average accuracies of the outlier detection algorithms optimized by OSD and OSD-Random respectively. The optimal results are \textbf{bolded}. The experimental results show that OSD is significantly better than OSD-Random. The average accuracies of the outlier detection algorithms optimized by OSD are almost always higher than those of the outlier detection algorithms optimized by OSD-Random. For example, the average accuracy of DIF+OSD-Random is only 0.668 (AUC), but the average accuracy of DIF+OSD is as high as 0.88 (AUC). Obviously, placing the virtual bomb at the center of particles is beneficial for the explosion process.

\begin{figure*}
  \centering
  \includegraphics[width=6in]{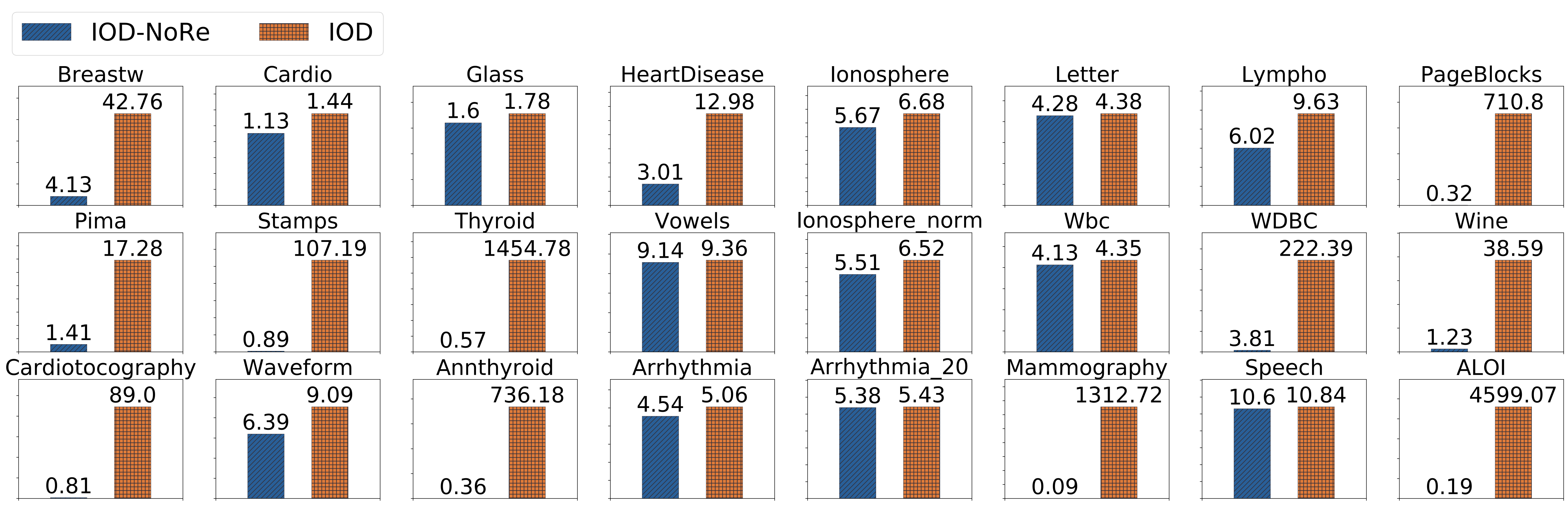}
  \caption{The comparison in distance.}
  \label{fig:OSD-NoRe}
\end{figure*}

\textbf{Repulsion Process.} In Section \ref{sec:Repulsion Process}, we design a repulsion process to prevent some outliers from mixing into normal objects. Here, we conduct experiments to verify whether the repulsion process plays a role. We refer to the version of OSD without the repulsion process as OSD-NoRe. We compare the average distances between outliers and normal objects in the datasets modified by OSD and OSD-NoRe respectively, as shown in Figure \ref{fig:OSD-NoRe}. Results show that the average distances in the datasets modified by OSD are significantly larger than the average distances in the datasets modified by OSD-NoRe. Therefore, the repulsion process can further increase the distance between outliers and normal objects. Table \ref{tab:OSD-NoRe} records the average accuracies of the outlier detection algorithms optimized by OSD and OSD-NoRe respectively. OSD is completely superior to OSD-NoRe, which is sufficient to illustrate that the repulsion process plays a role.

\begin{table*}
  \caption{The comparison in accuracy between OSD and OSD-NoRe.}
  \label{tab:OSD-NoRe}
  \setlength{\tabcolsep}{0.7mm}
  \centering
  \begin{tabular}{cc|cccccccccccccc}
    \hline
    \multicolumn{1}{c}{ }&\multicolumn{1}{c|}{ }&\multicolumn{1}{c}{LOF}& \multicolumn{1}{c}{ECOD}&\multicolumn{1}{c}{COF}&\multicolumn{1}{c}{EIF}&\multicolumn{1}{c}{CBLOF}& \multicolumn{1}{c}{RCA}&\multicolumn{1}{c}{DIF}&\multicolumn{1}{c}{INNE}&\multicolumn{1}{c}{IForest}&\multicolumn{1}{c}{KNNLOF}&\multicolumn{1}{c}{BLDOD}&\multicolumn{1}{c}{HDIOD}&\multicolumn{1}{c}{OTF}&\multicolumn{1}{c}{LUNAR}\\
    \hline
   \multirow{2}{*}{\textbf{AUC}}& IOD  & \textbf{0.863} & \textbf{0.868} & \textbf{0.825} & \textbf{0.883} & \textbf{0.89} & \textbf{0.865} & \textbf{0.88} & \textbf{0.878} & \textbf{0.898} & \textbf{0.704} & \textbf{0.82} & \textbf{0.852} & \textbf{0.621} & \textbf{0.884} \\ 
& OSD-NoRe & 0.756 & 0.817 & 0.723 & 0.839 & 0.786 & 0.824 & 0.786 & 0.819 & 0.853 & 0.678 & 0.789 & 0.831 & 0.586 & 0.823 \\ 
\multirow{2}{*}{\textbf{AP}} & IOD & \textbf{0.399} & \textbf{0.413} & \textbf{0.357} & \textbf{0.443} & \textbf{0.443} & \textbf{0.41} & \textbf{0.465} & \textbf{0.46} & \textbf{0.465} & \textbf{0.255} & \textbf{0.388} & \textbf{0.33} & \textbf{0.373} & \textbf{0.417} \\ 
& OSD-NoRe & 0.288 & 0.329 & 0.26 & 0.313 & 0.33 & 0.292 & 0.303 & 0.328 & 0.335 & 0.222 & 0.288 & 0.308 & 0.283 & 0.305 \\ 
    \hline
  \end{tabular}
\end{table*}

\subsection{Robustness Experiments}
\label{sec:Robustness experiments}
\textbf{Inflection Point.} As described in Section \ref{sec:The object-block division process}, OSD treats the inflection point as the threshold to divide object-blocks. Since the inflection point is actually a region rather than a value, in practice, we randomly select a value from the region as the threshold. Therefore, it is necessary to discuss the impact of selecting different values within the inflection point region on OSD. Here, we test OSD on datasets \emph{T8.8k} and \emph{Worm} \cite{ClusteringDatasets}, in which outliers intersperse between clusters. The first column of Figure \ref{fig:inflection} shows the original distribution of these datasets. We select three different values as thresholds from the inflection point region, as indicated by the red stars in the second column of Figure \ref{fig:inflection}. Columns 3 to 6 of Figure \ref{fig:inflection} show the datasets modified by OSD under the three different thresholds. Since the distribution of objects becomes broader after the explosion process, we specifically enlarge the view of the area where clusters are located (\emph{i.e.}, the area where normal objects are located). The experimental results show that, regardless of the threshold selected, the vast majority outliers are far from clusters (\emph{i.e.}, are far from normal objects). This result is expected because each object is connected by edges only to its $k$NN neighbors, meaning there are very rare outlier-edges (as defined in Lemma \ref{lemma}). The rarity causes the weights of outlier-edges to be concentrated on the left side of the inflection point region. Therefore, regardless of the threshold selected, outlier-edges will be cut. In summary, randomly selecting a value from the inflection point region as a threshold is robust for OSD.

\begin{figure*}
  \centering
    \includegraphics[width=6in]{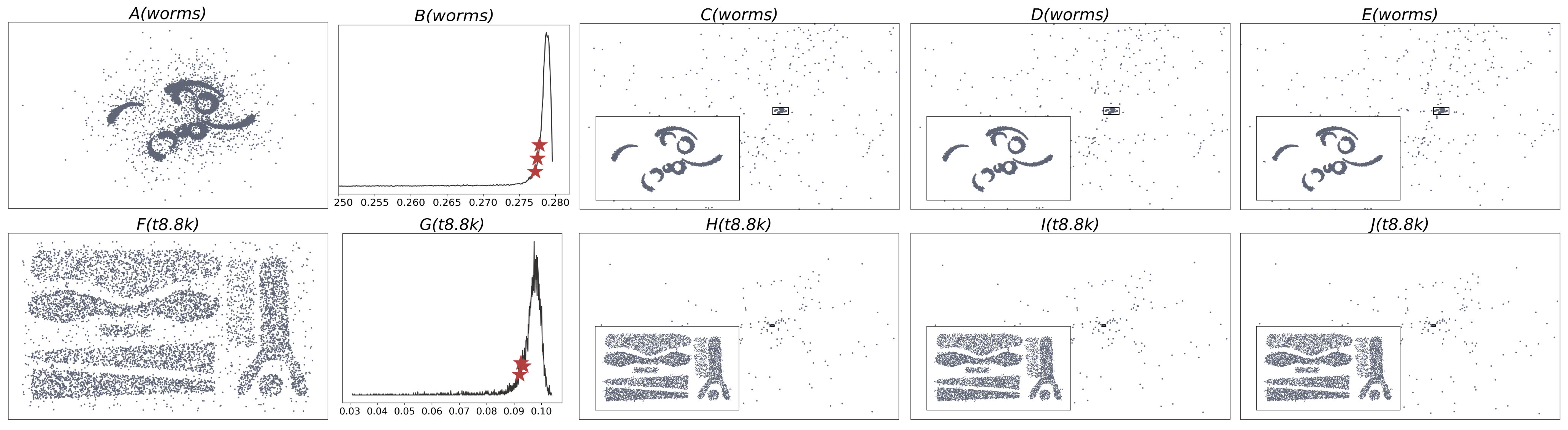}\\
  \caption{The impact of different values within the inflection point region on OSD.}
  \label{fig:inflection}
\end{figure*}

\begin{figure*}
  \centering
  \includegraphics[width=6in]{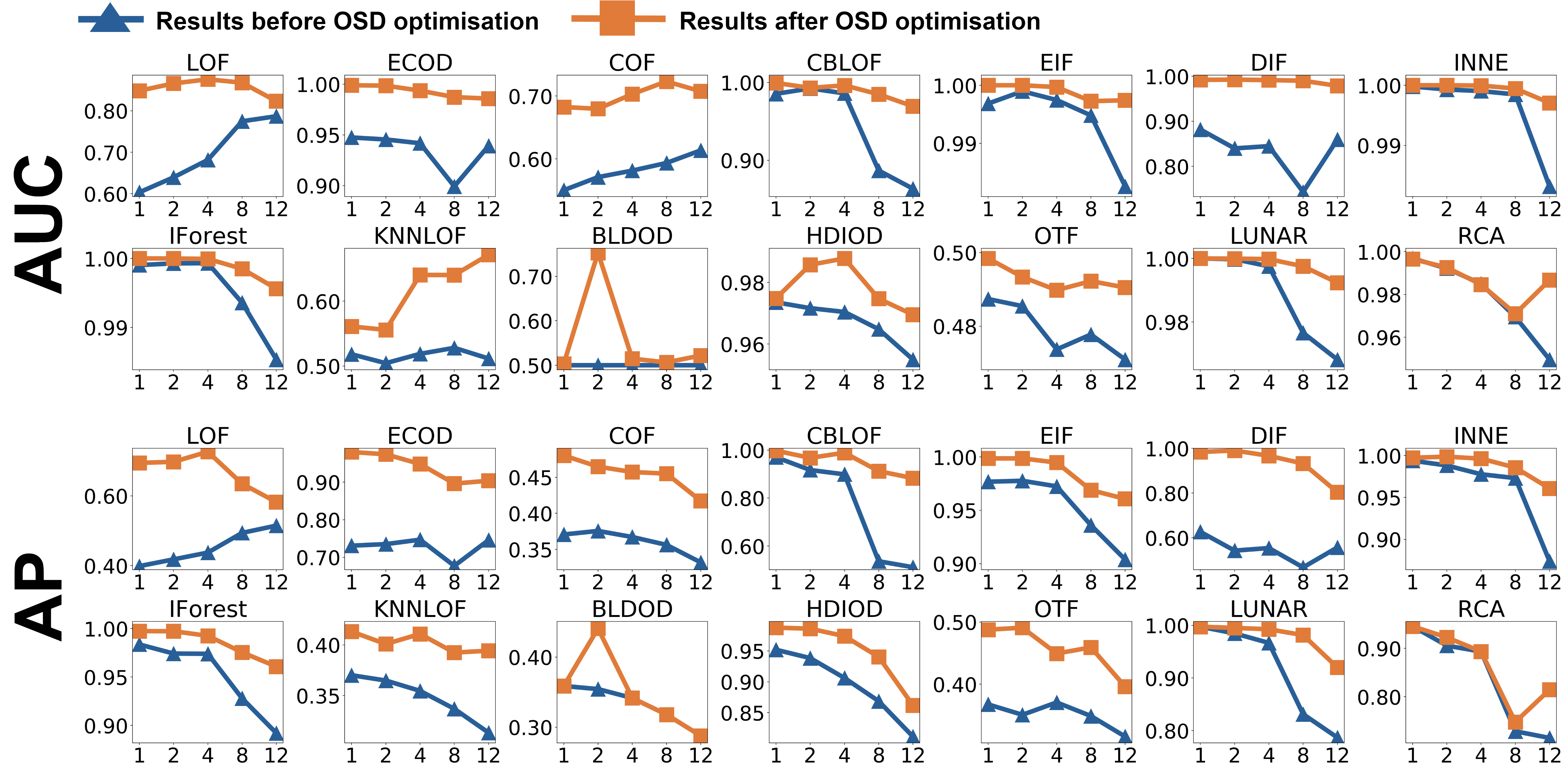}
  \caption{The impact of imbalance density on OSD.}
  \label{fig:unbalance}
\end{figure*}

\textbf{Imbalance Density.} Imbalance density is a common factor that interferes with outlier detection algorithms. In density-imbalanced datasets, where the density differences between normal objects are significant, outlier detection algorithms may mistakenly detect low-density normal objects as outliers. Therefore, it is necessary to validate the robustness of OSD on density-imbalanced datasets. We construct a set of density-imbalanced datasets, with imbalance levels (\emph{i.e.}, the ratio of the average density of normal objects in the highest-density cluster to the average density of normal objects in the lowest-density cluster) ranging from 1 to 12. Figure \ref{fig:unbalance} shows the accuracies of the outlier detection algorithm before and after optimization on these datasets. The experimental results show that, the accuracy curve of the outlier detection algorithm optimized by OSD is always higher than the original accuracy curve. Although nearly all accuracy curves decrease as the imbalance level increases, the accuracy curve of the outlier detection algorithm optimized by OSD decreases more gradually. Therefore, OSD remains effective on density-imbalanced datasets.

\section*{Conclusion and Future Works}
\label{sec:conclusion}
In this paper, we propose a ‘generic’ optimization strategy called OSD to address the redundancy issue of outlier detection algorithms and preserve the original advantages of the optimized outlier detection algorithms. OSD first divides the dataset into several object-blocks, such that potential outliers are assigned into small object-blocks. After detonating a virtual bomb, following the principles of momentum and impulse in physics, OSD forces small object-blocks to rocket away from large object-blocks, thereby increasing the distance between outliers and normal objects. Compared to the original dataset, outlier detection algorithms can more easily distinguish the differences between outliers and normal objects in the dataset modified by OSD, making it easier to achieve higher accuracy. We have confirmed the effectiveness and robustness of OSD through extensive experiments. In terms of average accuracy, OSD improves outlier detection algorithms by an average of 15\% (AUC) and 63.7\% (AP). 

However, OSD has two input parameters ($k$, $T$) and high time complexity, which may impact the user experience. In the future, we will further improve OSD to eliminate input parameters and lower time complexity.

\ifCLASSOPTIONcaptionsoff
  \newpage
\fi



\bibliographystyle{IEEEtran}

\bibliography{OSD}

\end{document}